\title{Self-Directed Node Classification on Graphs}
\definecolor{orange}{HTML}{fdd49e}
\definecolor{darkblue}{HTML}{2b8cbe}
\tikzset{VertexStyle/.style={shape=circle,fill=black,minimum size=5pt,inner sep=0pt}}
\tikzset{EdgeStyle/.style={thin}} %
\tikzset{BoldEdge/.style={ultra thick}} %
\newcommand{\GoodF}{\textsc{Good4}}
\newcommand{\haty}{\hat{y}}
\DeclareMathOperator{\scO}{\mathcal{O}}
\DeclareMathOperator{\scH}{\mathcal{H}}
\DeclareMathOperator{\scC}{\mathcal{C}}
\DeclareMathOperator{\poly}{poly}
\DeclareMathOperator{\nat}{\mathbb{N}}
\DeclareMathOperator{\reals}{\mathbb{R}}
\DeclareMathOperator{\conv}{{conv}}
\newcommand{\norm}[1]{\left\lVert #1 \right\rVert}
\DeclareMathOperator{\vc}{vc}
\DeclareMathOperator{\Msd}{M}
\DeclareMathOperator{\tw}{tw}
\renewcommand{\epsilon}{\varepsilon}
\newcommand{\eps}{\epsilon}
\newcommand{\qgood}{q_{\mathrm{good}}}
\newcommand{\Qgood}{Q_{\mathrm{good}}}
\DeclareMathOperator*{\argmax}{arg\,max}
\begin{document}

\maketitle
\def\thefootnote{*}\footnotetext{Equal contribution.}\def\thefootnote{\arabic{footnote}}

\begin{abstract}
    We study the problem of classifying the nodes of a given graph in the self-directed learning setup.
    This learning setting is a variant of online learning, where rather than an adversary determining the sequence in which nodes are presented, the learner autonomously and adaptively selects them. 
	While self-directed learning of Euclidean halfspaces, linear functions, and general multiclass hypothesis classes was recently considered, no results previously existed specifically for self-directed node classification on graphs. In this paper, we address this problem developing efficient algorithms for it. More specifically, we focus on the case of (geodesically) convex clusters, i.e., for every two nodes sharing the same label, all nodes on every shortest path between them also share the same label.
	In particular, we devise an algorithm with runtime polynomial in $n$ that makes only $3(h(G)+1)^4 \ln n$ mistakes on graphs with two convex clusters, where $n$ is the total number of nodes and $h(G)$ is the Hadwiger number, i.e., the size of the largest clique minor of the graph $G$.
	We also show that our algorithm is robust to the case that clusters are slightly non-convex,
	still achieving a mistake bound logarithmic in $n$.
	Finally, we devise a simple and efficient algorithm for homophilic clusters, where strongly connected nodes tend to belong to the same class.  %
\end{abstract}

\begin{keywords}%
   self-directed learning, online learning, node classification, graphs, time complexity.%
\end{keywords}
\section{Introduction}

We study the problem of learning node clusters of a given graph $G=(V,E)$,  where $V$ and $E$ respectively denote its vertex and edge set, in the {\em self-directed} setting~\citep{goldman1994power, ben1997online}, that recently regained novel interest~\citep{devulapalli2024dimension, diakonikolas2023self, kontonis2023gain}. Each node in $v\in V$, with $|V|=n$, is associated with a label belonging to $Y=\{1,2,\ldots,k\}=[k]$, where $k$ is the total number of classes, through a labeling function $y: V\to Y$. In this learning setting, in an online fashion, at each time step $t\in \{1,\dots,n\}$ the learner is required to select a node $v_t\in V$ and predict its label. After each prediction, the learner receives the true label  $y(v_t)$, and, if it is different from the predicted one $\haty(v_t)$, it made a mistake. The learner continues until all $n$ labels are predicted and received, i.e., for exactly $n$ trials. Its goal is to predict all labels while minimizing the number of mistakes.

Our work is motivated by node classification in 
graphs that emerge in various real-world domains, such as, social networks (including collaboration and citation networks), communication networks, biological networks (e.g., protein-protein interaction networks, RNA structures), and infrastructure networks (e.g., transportation).
On this class of graphs, self-directed learning can be used to model common practical problems.
For example, in advertising on a social network, the advertiser
can offer one among $k$ products to each user (nodes), who can either buy it or not.  Since we learn the reaction of the user(s) after each offer, it is advantageous to offer the product to a single user per trial in a sequential fashion exploiting the knowledge of the (social) network topology. This approach allows us to observe the reaction (label) and tailor subsequent offers to minimize the number of mistakes.
Another example is the public transportation system, where each station is a node in a connected graph. Decisions involve allocating resources like trains to each station sequentially, one station at a time. The nodes (stations) are labeled based on the urgency of service reinforcement or type of service required.
The goal is to minimize assignment errors to maintain operational efficiency and passenger satisfaction.
Finally, in smart grids, each substation or component distributing energy is a node in a graph, connected to indicate the energy transmission paths. Nodes are labeled with the type of energy distributed or the demand level.
Decisions on how to distribute energy to each node are made sequentially, aiming to minimize errors in energy distribution.

Even assuming the graph is fully given, learning cannot happen without an inductive bias. Hence, in \Cref{sec:main}, we assume that all clusters are {\em convex}, that is, for any pair of nodes $a,b$ in the same cluster, all nodes on shortest paths between $a$ and $b$ also belong to exactly that cluster. This assumption holds for many communities in real-world graphs, as we discuss in \Cref{sec:discussion}. 
In \Cref{sec:near_convex}, we also show how to deal with labelings where this assumption is violated for some pairs of nodes. 
While the convexity assumption is related to the more common {\em homophily} assumption---the tendency of strongly connected nodes to be associated with the same class, they capture different aspects and are independent of each other. There can be convex and strongly non-homophilic clusters, and vice versa. 
We consider specifically homophilic clusters in \Cref{sec:homophilic}.

\paragraph{Main contributions.}
\begin{enumerate}
	\item We propose the problem of self-directed node classification.
	\item We devise an algorithm with polynomial runtime in $n$, called \GoodF, that learns any labeling given by two convex clusters with at most $3(h(G)+1)^4\ln n$ mistakes, where $h(G)$ is the size of the largest clique minor of $G$ (\Cref{thm:convex_case}). %
	\item We devise a robust variant of our algorithm relaxing the convexity assumption, achieving a mistake bound of $3(h(G)+1)^4\ln n+4M^*$, where $M^*$ is the minimum number of label flips requried to obtain a convex labeling (\Cref{sec:near_convex}).
	\item We establish general lower bounds on the number of mistakes (\Cref{sec:lower}) and explore graph families for which our bounds are nearly optimal (\Cref{sec:specific_graphs}).
	\item For (not necessarily convex) homophilic labelings we develop a simple linear-time algorithm achieving the mistake bound $|\partial\scC_y|+1$ where $\partial \scC_y$ is the cut-border induced by the true labeling $y$, that is, all nodes that are adjacent to a node with different label (\Cref{prop:homophilic_upper}). We provide a related lower bound given in terms of the merging degree (\Cref{prop:homophilic_lower}).
\end{enumerate}

\paragraph{Related work.}
Self-directed learning \citep{goldman1994power,ben1997online,ben1998self} is a variant of the standard online learning problem \citep{littlestone1988learning}, which allows the learner to select the points itself instead of a worst-case adversary. More recently, the mistake complexity of multiclass self-directed learning was characterized by \citet{devulapalli2024dimension}. \citet{hanneke2023trichotomy} provided mistake bounds in the related \emph{sequence-transductive} (also called worst-case sequence) offline model, which lies in between the self-directed and the online variant.  \citet{diakonikolas2023self}  studied a self-directed variant of linear classification and  \citet{kontonis2023gain} tackled the corresponding regression problem. For node classification, active learning \citep{afshani2007complexity,guillory2009label,cesa2010active} and online learning \citep{herbster2005online, cesa2009fast,cesa2013random,herbster2015online} were considered so far.  Only \citet{herbster2005online} stated first results on a budgeted variant of self-directed learning\footnote{\citet{herbster2005online} call their budgeted variant of self-directed learning active learning.} for node classification, where a fixed number of self-directed rounds are performed and afterwards learning proceeds in the usual online setup. We close this gap and provide first efficient algorithms and mistake bound for self-directed node classification. %

We focus on the problem of learning clusters that are \emph{geodesically convex}, a well-studied variant \citep{harary1981convexity, duchet1983ensemble, van1993theory, pelayo2013geodesic} of standard Euclidean convexity. \citet{seiffarth2023maximal} studied supervised learning of such convex clusters in a graph, \citet{bressan2021exact} and \citet{thiessen2021active} considered the active learning, and \citet{thiessen2022online} the online learning variant. \citet{bressan2024efficient} studied a related setting of clusters that are closed under \emph{induced} paths.

\section{Preliminaries}\label{sec:prelim} %
Let $G=(V,E)$ be a simple and connected %
graph. Here, simple refers to the fact that there is at most one edge between any pair of nodes. For the rest of the paper, we always assume the graphs to be connected and simple. Let $y:V\to Y$ be the node labels %
with $Y=[k]=\{1,\dots,k\}$, where $k\in\nat$ is the number of classes. Let $n=|V|$ for the number of nodes. For all $i\in[k]$, we call $C_i =  \{v\in V \mid y(v)=i\}$ a \emph{cluster} and let $\scC_y=\{C_1,\dots,C_k\}$. We call an edge $\{u,v\}\in E$ a \emph{cut-edge} if 
$y(u)\neq y(v)$. %
A node incident to a cut-edge is a \emph{cut-node} and the set of all cut-nodes is called the \emph{cut-border}, denoted by $\partial\scC_y$. %

We operate within the \emph{self-directed learning} setting, which lies between classical active and online learning. In this setting, the learner has access to the graph $G$ and does not know the labels $y$. Then, for each trial $t=1,\dots,n$, we execute the steps

\begin{enumerate}
	\item Learner selects $v_t\in V\setminus\{v_1,\dots,v_{t-1}\}$.\label{step:selection}
	\item Learner predicts $\haty_t(v_t)\in[k]$.\label{step:prediction}
	\item %
	Learner observes $y_t(v_t)\in[k]$ and incurs a mistake if and only if $\haty_t(v_t)\neq y_t(v_t)$. %
\end{enumerate}
The learner's goal is to minimize the number of mistakes. 
Let $\Msd(A,y)$ be the number of mistakes made by algorithm $A$ on node set $V$ with labeling $y$. Given a hypothesis space $\scH\subseteq [k]^V$, we denote by $\Msd(A,\scH)=\max_{y\in\scH} \Msd(A,y)$ the maximum number of mistakes $A$ over all labelings belonging to $\scH$. For $k=2$, we denote the VC dimension (see, e.g., \citet{vapnik1971on, shalev2014understanding}) of $\scH$ as $\vc(\scH)$. If $\scH$ is known to the learner and the true labeling $y$ belongs to the hypothesis space $\scH$, this is known as the \emph{realizable} setting.  
The (realizable) \emph{self-directed learning complexity} of a given $\scH$ is $\Msd(\scH)=\min_{A}\Msd(A,\scH)$, i.e., 
the number of mistakes an optimal algorithm would make. For $k=2$, this quantity is nicely characterized by the rank of certain game trees \citep{ben1997online,ben1998self} similarly to the \emph{Littlestone dimension}~\citep{littlestone1988learning} and was recently generalized to the multiclass case by \citet{devulapalli2024dimension}.
We emphasize the difference to the more standard online learning on graphs protocol, where by contrast the node is adversarially selected in each step \citep{littlestone1988learning, herbster2005online, cesa2009fast}.

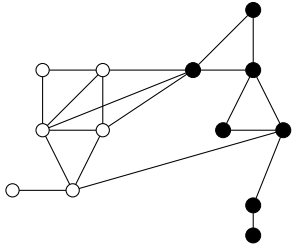
\begin{figure}[ht]
	\centering
	\begin{tikzpicture}[scale=0.4]
		\GraphInit[vstyle=Classic]
		\SetVertexNoLabel %
		\SetGraphUnit{2}
		
		\tikzset{VertexStyle/.style={shape=circle,fill=white, draw=black,minimum size=5pt,inner sep=0pt}}
		\Vertex[x=0, y=0]{A}
		\Vertex[x=2, y=0]{B}
		\Vertex[x=2, y=-2]{C}
		\Vertex[x=0, y=-2]{D}
		\Vertex[x=1, y=-4]{J}
		\Vertex[x=-1, y=-4]{K}
		
		\SetGraphUnit{3}
		\GraphInit[vstyle=Classic]
		\SetUpVertex[FillColor=black,LineWidth=0pt, MinSize=0.1pt]
		\Vertex[x=5, y=0]{E}
		\Vertex[x=7, y=0]{F}
		\Vertex[x=8, y=-2]{G}
		\Vertex[x=6, y=-2]{H}
		\Vertex[x=7, y=2]{I}
		\Vertex[x=7, y=-4.5]{L}
		\Vertex[x=7, y=-5.5]{M}
		
		\Edges(A,B,C,D,A)
		\Edges(B,D)
		\Edges(B,E,F,G)
		\Edges(F,H)
		\Edges(F,I)
		\Edge(E)(C)
		\Edge(E)(D)
		\Edge(E)(I)
		\Edges(D,J,C)
		\Edge(K)(J)
		\Edge(H)(G)
		\Edge(J)(G)
		\Edge(G)(L)
		\Edge(L)(M)
	\end{tikzpicture}
	\caption{Example of a graph with a convex $2$-labeling. In such labelings for any two nodes of the same label, all nodes on every shortest path between them also share the same label.}
	\label{fig:convex_coloring}
\end{figure}

\paragraph{Graph convexity.}
We focus on clusters that are \emph{(geodesically) convex}, a notion closely related to ordinary convex sets in Euclidean space. A cluster $C\subseteq V$ is convex if and only if for all two nodes $a,b\in C$, also all nodes on every shortest path between $a$ and $b$ are in $C$. Note the resemblance to the standard definition of convex sets in Euclidean space through line segments.

We introduce here just the required concepts and refer the reader to \citet{duchet1983ensemble}, \citet{van1993theory}, and \citet{pelayo2013geodesic}. Let $I(u,v)=\{x\in V \mid x \text{ is on a shortest } u\text{-}v\text{ path}\}$ be the \emph{(geodesic) interval} for $u,v\in V$. A set $C\subseteq V$ is convex if and only if for all $a,b\in C$, it holds that $I(a,b)\subseteq C$. Convex sets in Euclidean space can be similarly defined through $I_{\norm{\cdot}_2}(u,v)=\{x\in\reals^d\mid \norm{u-x}_2+\norm{x-v}_2=\norm{u-v}_2\}$. 
Having defined convex sets, we can define convex hulls $\conv(A)=\bigcap_{C\supseteq A, C \text{ convex }}C$. %
If we only have two clusters and both are convex, we call the labeling a \emph{convex bipartition} or \emph{halfspace} of the graph, see Figure~\ref{fig:convex_coloring} for an example.
One main subject of study in convexity theory are \emph{separation axioms}, $S_1$ to $S_4$, which characterize the separation ability of halfspaces \citep{bandelt1989graphs,chepoi1994separation,chepoi2024separation}.
We will only use the $S_4$ separation axiom.
We say a graph is $S_4$ (i.e., it satisfies $S_4$) if for any pair $A,B\subseteq V$ with $\conv(A)\cap\conv(B)=\emptyset$ there exists a halfspace $H\subseteq V$ such that $A\subseteq H$ and $B\subseteq V\setminus H$. %

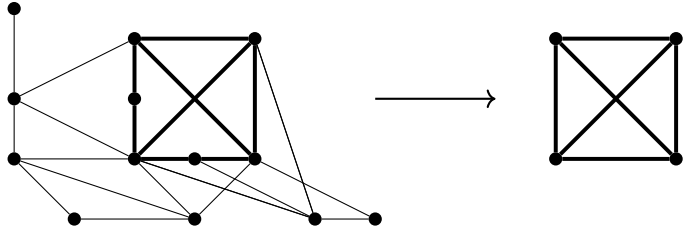
\begin{figure}[ht]
	\centering
	\begin{tikzpicture}[scale=0.8]
		\SetGraphUnit{1}
		\SetVertexNoLabel
		\Vertex[x=0, y=0]{1}
		\Vertex[x=2, y=1]{2}
		\Vertex[x=4, y=1]{3}
		\Vertex[x=0, y=-1]{4}
		\Vertex[x=2, y=-1]{5}
		\Vertex[x=4, y=-1]{6}
		\Vertex[x=1, y=-2]{7}
		\Vertex[x=3, y=-2]{8}
		\Vertex[x=0, y=1.5]{9}
		\Vertex[x=5, y=-2]{10}
		\Vertex[x=6, y=-2]{11}
		\Vertex[x=2, y=0]{12}
		\Vertex[x=3, y=-1]{13}
		
		\Edge(13)(10)
		\Edge(2)(3)
		\Edge(1)(4)
		\Edge(4)(7)
		\Edge(9)(1)
		\Edge(8)(6)
		\Edge(3)(6)
		\Edge(2)(12)
		\Edge(5)(13)
		\Edge(13)(6)
		\Edge(12)(5)
		\Edge(4)(5)
		\Edge(5)(8)
		\Edge(7)(8)
		\Edge(3)(10)
		\Edge(10)(11)
		\Edge(6)(11)
		\Edge(1)(5)
		\Edge(2)(6)
		\Edge(4)(8)
		\Edge(5)(10)
		\Edge(1)(2)
		\Edge(5)(10)
		\Edge(10)(3)
		
		\SetUpEdge[lw=1.5pt]
		\Edge(2)(3)
		\Edge(3)(6)
		\Edge(6)(13)
		\Edge(5)(13)
		\Edge(12)(5)
		\Edge(2)(12)
		\Edge(2)(6)
		\Edge(5)(3)

		\GraphInit[vstyle=Simple]
		\SetVertexNoLabel
		\begin{scope}[xshift=9cm, yshift=-1cm, rotate=90]
			\Vertex[x=0, y=-2]{A}
			\Vertex[x=2, y=0]{B}
			\Vertex[x=2, y=-2]{C}
			\Vertex[x=0, y=0]{D}
			\Edges(A,B,C,D,A,C)
			\Edges(B,D)
		\end{scope}
		
		\draw[->, thick] (6, 0) -- (8, 0);
	\end{tikzpicture}
	\caption{Example of a graph with a $K_4$-minor}
	\label{graph_minor}
\end{figure}

\paragraph{Clique number, Hadwiger number, and treewidth.} We denote by $K_w$ the \emph{clique} (or \emph{complete}) graph on $w\in\nat$ nodes. The \emph{clique number} $\omega(G)$ of a graph $G$, is the size of the largest clique that is a subgraph of $G$. A graph $H$ is a \emph{minor} of another graph $G$ if $H$ can be derived from $G$ by a sequence of edge contractions, edge deletions, and vertex deletions, see \citet{Tutte1961}, \citet{robertson1985graph}, and an example in Figure~\ref{graph_minor}. Here, contracting an edge means merging the two nodes connected by the edge and removing the edge itself. We denote by $h(G)$ the \emph{Hadwiger number} of $G$, which is the size of the largest clique minor in $G$. Thus, if $G$ is minor-free for some $K_w$, it holds that $h(G)<w$. Intuitively $h(G)$ is a measure of sparsity. Another such measure is the treewidth $\tw(G)$, quantifying the tree-likeness of $G$. For a definition of treewidth see \citet{RobertsonSeymour1984, RobertsonSeymour1986}. %

\section{Self-directed learning of convex bipartitions}\label{sec:main}
In this section, we introduce our main contribution, a polynomial time algorithm for self-directed learning of halfspaces $\scH$ of $G$ achieving a near-optimal mistake bound. Full proofs can be found in the appendix.
Before we discuss our proposed algorithm, let us briefly consider existing algorithms and their drawbacks.
\begin{proposition}[\citealt{ben1997online}]\label{prop:mistake_vs_vc}
	Let $\scH\subseteq 2^V$ be a hypothesis space with $|V|=n$. Then, it holds that 
	\[
	\Omega\left(\frac{\vc(\scH)}{\ln n}\right)\leq \Msd(\scH)\leq \scO(\vc(\scH)\ln n)\,.
	\]
\end{proposition}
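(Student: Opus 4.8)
The plan is to prove the two bounds separately: the upper bound is an essentially immediate application of the Halving algorithm, while the lower bound is the substantive part and will follow from a compression/reconstruction argument.

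For the upper bound $\Msd(\scH)\le \scO(\vc(\scH)\ln n)$, I would run the Halving algorithm: maintain the version space $\scH_t\subseteq\scH$ of all hypotheses consistent with the labels seen so far, and on the chosen node $v_t$ predict the label assigned by the majority of hypotheses in $\scH_t$. Every mistake means the majority was wrong, so at least half of $\scH_t$ is discarded and $|\scH_{t+1}|\le |\scH_t|/2$; since $|\scH_1|=|\scH|$ and the version space never becomes empty in the realizable case, the total number of mistakes is at most $\log_2|\scH|$. The Sauer--Shelah lemma gives $|\scH|\le (en/\vc(\scH))^{\vc(\scH)}$, hence $\log_2|\scH| = \scO(\vc(\scH)\ln n)$. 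Note this bound holds for \emph{any} order in which the nodes are presented, in particular for the self-directed order the learner is free to choose; thus no special structure of the selection step is needed for the upper bound.

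For the lower bound $\Msd(\scH)\ge\Omega(\vc(\scH)/\ln n)$, let $A$ be an optimal deterministic self-directed algorithm, so it makes at most $M:=\Msd(\scH)$ mistakes on every $y\in\scH$. The key observation is a reconstruction argument specific to binary labels: running $A$ against a labeling $y$, at each trial the chosen node $v_t$ and prediction $\haty_t$ are a deterministic function of the past observed labels, and since labels are binary, the observed label $y(v_t)$ equals the prediction exactly when the trial is not a mistake and the opposite otherwise. Consequently the entire transcript---and hence the full labeling $y$ on $V$, as all $n$ nodes are eventually queried---is determined by the set $S_y\subseteq\{1,\dots,n\}$ of mistaken trials alone. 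The map $y\mapsto S_y$ is therefore injective on $\scH$, and since $|S_y|\le M$ we obtain $|\scH|\le\sum_{i=0}^{M}\binom{n}{i}$. To finish, I would fix a shattered set of size $\vc(\scH)$, which forces $|\scH|\ge 2^{\vc(\scH)}$, and combine with the counting bound: $2^{\vc(\scH)}\le\sum_{i=0}^{M}\binom{n}{i}\le (en/M)^{M}$. Taking logarithms yields $\vc(\scH)\ln 2\le M\ln(en/M)\le M(1+\ln n)$, i.e.\ $M\ge \vc(\scH)\ln 2/(1+\ln n)=\Omega(\vc(\scH)/\ln n)$.

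I expect the main obstacle to be the lower bound's reconstruction step rather than any computation: one must argue carefully that determinism of $A$ together with binary labels makes the labeling recoverable from the mistake-index set, so that distinct labelings in $\scH$ cannot collapse to the same mistake pattern. The remaining work---Sauer--Shelah for the upper bound and the inversion of $2^{d}\le(en/M)^{M}$ into $M=\Omega(d/\ln n)$---is standard, with only mild care needed in the regime where $M$ is small or close to $n$.
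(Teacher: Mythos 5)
Your proof is correct, but note that the paper itself never proves this proposition: it is imported verbatim from \citet{ben1997online}, and the only in-text remark is that the upper bound follows from \textsc{Halving}. Your upper-bound half coincides exactly with that remark (Halving plus Sauer--Shelah to get $\log_2|\scH|=\scO(\vc(\scH)\ln n)$), so there is nothing to compare there. Your lower-bound half is the substantive contribution, and it is sound: for a deterministic self-directed learner the trial-$t$ query and prediction are determined by the previously observed labels, and with binary labels each observed label is determined by the prediction together with the bit ``was trial $t$ a mistake,'' so the whole transcript --- and hence $y$, since all $n$ nodes are queried --- is a function of the mistake-index set $S_y$. Injectivity of $y\mapsto S_y$ gives $2^{\vc(\scH)}\le|\scH|\le\sum_{i=0}^{M}\binom{n}{i}\le(en/M)^M$ and thus $M=\Omega(\vc(\scH)/\ln n)$; this is in fact the standard encoding argument behind the cited result, so your route matches the original source rather than anything in this paper. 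Two minor points worth making explicit if you write this up: the argument needs $\Msd(\scH)$ to be defined over deterministic algorithms (which is the convention here), and the bound $\sum_{i=0}^{M}\binom{n}{i}\le(en/M)^M$ needs $1\le M\le n$, with $M=0$ handled trivially ($|\scH|\le 1$).
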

The upper bound $\scO(\vc(\scH)\ln n)$ can easily be achieved by the \textsc{Halving} algorithm \citep{littlestone1988learning}. Thus, in the case of halfspaces on a graph, \textsc{Halving} achieves the following mistake bound. 

\begin{restatable}{prop}{convexHalving}\label{prop:convexHalving}   
	Let $G=(V,E)$ be a graph with $n=|V|$ and let $\scH$ be the set of convex bipartitions of $G$. Then, $\Msd(\textsc{Halving}, \scH)=\scO(h(G)\ln n )$.
\end{restatable}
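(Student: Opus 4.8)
The plan is to reduce the mistake bound to a bound on the VC dimension of $\scH$ and then control $\vc(\scH)$ by the Hadwiger number. By \Cref{prop:mistake_vs_vc}, together with the stated fact that \textsc{Halving} attains the upper bound $\scO(\vc(\scH)\ln n)$, it suffices to prove the key estimate $\vc(\scH)=\scO(h(G))$; combining the two immediately yields $\Msd(\textsc{Halving},\scH)=\scO(h(G)\ln n)$.

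To bound the VC dimension I would start from a shattered set $S=\{s_1,\dots,s_d\}\subseteq V$ with $d=\vc(\scH)$ and show that $G$ must then contain a clique minor of size $\Omega(d)$, so that $h(G)=\Omega(d)$. The first observation is that shattering makes $S$ \emph{Radon independent}: for every bipartition $S = A\sqcup(S\setminus A)$ there is a halfspace $H\in\scH$ with $A\subseteq H$ and $S\setminus A\subseteq V\setminus H$, and since both $H$ and $V\setminus H$ are convex this gives $\conv(A)\cap\conv(S\setminus A)=\emptyset$ for every such bipartition. In particular, fixing for each $i$ a halfspace $H_i$ that isolates $s_i$ (i.e.\ $H_i\cap S=\{s_i\}$), I would define the candidate branch sets
\[
B_i \;=\; H_i\cap\bigcap_{j\neq i}(V\setminus H_j)\,.
\]
Each $B_i$ is an intersection of convex sets, hence convex and therefore connected, contains $s_i$, and the $B_i$ are pairwise disjoint (a vertex in $B_i\cap B_j$ would have to lie both inside and outside $H_i$). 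This already produces $d$ pairwise disjoint connected subgraphs, each anchored at a distinct $s_i$.

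The remaining, and in my view main, obstacle is to upgrade these disjoint connected sets into a genuine clique minor, i.e.\ to guarantee that a suitable $\Omega(d)$-sized subfamily of the $B_i$ is pairwise adjacent, possibly after absorbing some of the leftover vertices. Here I would exploit Radon independence more heavily: for each pair $i\neq j$ the separating halfspaces force the geodesics between $s_i$ and $s_j$ into a controlled region, and I would use these geodesics as connectors, contracting them into the branch sets while preserving disjointness. Making this routing simultaneously consistent for all pairs---so that no connector is reused and the resulting branch sets stay connected and pairwise touching---is the delicate part, and it is where the constant hidden in $\scO(h(G))$ is determined.

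An alternative, and arguably cleaner, route for the same estimate is via the Radon number $\radon(G)$ of the geodesic convexity: the Radon independence argument above shows directly that any shattered set has size strictly less than $\radon(G)$, hence $\vc(\scH)\le \radon(G)-1$, after which it remains only to prove $\radon(G)=\scO(h(G))$. I expect this last inequality---bounding the Radon number of geodesic convexity by the size of the largest clique minor---to be the technical heart of the argument regardless of which formulation one adopts; the sanity checks $\vc(\scH)=\omega(G)$ for cliques and $\vc(\scH)=3$, $h(G)=2$ for trees are both consistent with the claimed linear dependence on $h(G)$.
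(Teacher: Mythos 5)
Your reduction is exactly the paper's proof: combine \Cref{prop:mistake_vs_vc} (with \textsc{Halving} attaining the $\scO(\vc(\scH)\ln n)$ upper bound) with the estimate $\vc(\scH)=\scO(h(G))$. The difference is that the paper does not prove this estimate at all---it cites it as known, from \citet{duchet1983ensemble} and \citet{thiessen2021active}---whereas you attempt to prove it from scratch and stop short of completing the argument.

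What you do prove is sound: shattering implies Radon independence, hence $\vc(\scH)\le\radon(G)-1$, and your branch sets $B_i=H_i\cap\bigcap_{j\neq i}(V\setminus H_j)$ are indeed convex, connected, pairwise disjoint, and contain the respective $s_i$. But the step you defer as ``the delicate part'' is the entire content of the theorem you need. Pairwise disjoint connected sets anchored at the $s_i$ exist in every graph for every $d\le n$ (the singletons $\{s_i\}$ already qualify), so disjointness and connectivity alone extract no information about $h(G)$; everything hinges on establishing pairwise adjacency of $\Omega(d)$ branch sets after routing and absorbing connectors, and this is never carried out. The missing statement---that a Radon-independent set of size $d$ forces a clique minor of size $\Omega(d)$, equivalently $\radon(G)\le h(G)+1$ for geodesic convexity---is precisely the result the paper cites. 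So judged as a self-contained argument, your proposal has a genuine gap at exactly that point; judged at the paper's own level of rigor, it becomes complete the moment you invoke the reference for the Radon-number (or VC) bound instead of promising to prove it.
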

However, naively running \textsc{Halving} requires a majority vote over the version space, i.e., the set of all hypotheses that are consistent with the data so far, in each step. Unfortunately, computing the version space for geodesic halfspaces is known to be NP-hard \citep{seiffarth2023maximal}.
The optimal algorithms of \citet{devulapalli2024dimension} and \citet{ben1997online} for self-directed learning have similar issues as they depend on the version space, as well. 
More generally, it is not known if such bounds can be achieved by a computational efficient algorithm, i.e., with polynomial computational complexity.

Here, we propose a new algorithm, \GoodF\  (Good Quadruples), that has a better trade-off between computational runtime complexity and the number of mistakes: \GoodF\  runs in polynomial time and achieves a mistake bound similar to \textsc{Halving}. 

Before presenting the actual algorithm let us start with some intuition and observations. %
We use the fact that for large enough subsets of nodes in sparse graphs, shortest paths intersect. For that, let us define a \emph{quadruple} as a set $\{(a,b),(c,d)\}$ containing two pairs of nodes $(a, b)$ and $(c, d)$, where $a$, $b$, $c$, $d$, are four distinct nodes. We define a quadruple as a \emph{good quadruple} if there exists one shortest path connecting $a$ to $b$ that intersects at least one shortest path connecting $c$ to $d$, that is, they share at least one common node. Said differently, $I(a,b)\cap I(c,d)\neq\emptyset$. %
A sufficient condition for the existence of many good quadruples is that the Hadwiger number $h(G)$ is not too large.
\begin{restatable}{prop}{propA}\label{proposition_1}
	Let $G$ be $K_w$-minor free (i.e., $h(G)< w$). Then, any subset of $\max(w,4)$ nodes contains a good quadruple.
\end{restatable}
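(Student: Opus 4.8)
The plan is to argue by contradiction: suppose some set $S\subseteq V$ with $|S|=\max(w,4)$ contains no good quadruple, and derive a $K_w$ minor in $G$, contradicting $K_w$-minor-freeness (i.e.\ $h(G)<w$). Writing $m=|S|=\max(w,4)$, the assumption unpacks to the statement that for every four distinct nodes $a,b,c,d\in S$ and every pairing of them, $I(a,b)\cap I(c,d)=\emptyset$; in particular, for any two index-disjoint pairs from $S$, every geodesic between the first pair is vertex-disjoint from every geodesic between the second pair. Since $m\geq w$, producing a $K_m$ minor (hence a $K_w$ minor) suffices; the $\max(w,4)$ merely guarantees we have the four distinct nodes a quadruple requires, and the case $w\le 4$ is covered by building a $K_4$ minor on four nodes, which contains $K_w$ as a minor.

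For the construction, I would use the $m$ nodes of $S$ as seeds of $m$ branch sets and connect them through chosen geodesics. Concretely, fix one geodesic $P_{ij}$ for each pair $\{v_i,v_j\}$ and consider the union of all $P_{ij}$. The disjointness from the previous paragraph immediately gives that $P_{ij}$ and $P_{kl}$ never meet when $\{i,j\}\cap\{k,l\}=\emptyset$, which is exactly the condition needed so that the \emph{cross} connections responsible for the clique structure do not interfere. The remaining task is to carve this union into connected, pairwise-disjoint branch sets $B_1,\dots,B_m$ with $v_i\in B_i$, such that $B_i$ and $B_j$ are joined by an edge for every $i\neq j$, where the adjacency witnessing $B_i$–$B_j$ is read off from $P_{ij}$.

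The main obstacle is that the no-good-quadruple hypothesis only controls configurations with \emph{four distinct} terminals: two geodesics sharing an endpoint, say $P_{ij}$ and $P_{ik}$, involve only three terminals and may therefore overlap well beyond the shared node $v_i$. Such overlaps can make the naive branch sets (a terminal together with the initial segments of its incident geodesics) intersect, breaking disjointness. I expect the crux of the argument to be resolving these three-terminal overlaps. The approach I would pursue is to route the geodesics coherently, e.g.\ by extracting them from shortest-path trees so that geodesics emanating from a common endpoint are laminar (they share a common prefix and then separate without re-merging), and then to assign each shared segment to a single branch set by a region-growing/charging scheme, contracting the overlapping portions while preserving both connectivity of each $B_i$ and the edge witnessing each required adjacency. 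Once every pair of branch sets is connected and pairwise adjacent, contracting each $B_i$ yields $K_m$, contradicting $h(G)<w$ and completing the proof.
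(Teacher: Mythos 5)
Your high-level strategy is the same as the paper's: argue by contraposition and turn the system of pairwise geodesics on the chosen $\max(w,4)$ nodes into a $K_w$ minor, using the no-good-quadruple hypothesis to make geodesics of vertex-disjoint pairs vertex-disjoint. The paper's own proof is exactly the naive one-liner version of this (``contract all edges on the shortest paths between each pair''), with no attention to overlaps. You go further and correctly isolate the obstruction: the hypothesis says nothing about two geodesics sharing an endpoint, so the would-be branch sets need not be disjoint. But your argument stops at precisely that point --- the laminar rerouting via shortest-path trees and the ``region-growing/charging scheme'' are proposals, not proofs (and note that a pair's geodesic can be made laminar with respect to the tree of at most one of its two endpoints, so even the rerouting step is not achievable as described). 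This is a genuine gap.

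More importantly, the gap cannot be closed, because the proposition as stated is false, and the counterexample is exactly a three-terminal-overlap configuration. Take the theta graph: two vertices $u$ and $z$ joined by three internally disjoint paths $u,p_i,q_i,z$ ($i=1,2,3$), each of length three. This graph has treewidth $2$, hence $h(G)=3<4$, so it satisfies the hypothesis with $w=4$. Now consider the four nodes $p_1,p_2,p_3,z$. Since the only common neighbour of $p_i$ and $p_j$ is $u$, we get $I(p_i,p_j)=\{p_i,u,p_j\}$ (the detour through $q_i,z,q_j$ has length $4$), and similarly $I(p_i,z)=\{p_i,q_i,z\}$. All three pairings $\{(p_1,p_2),(p_3,z)\}$, $\{(p_1,p_3),(p_2,z)\}$, $\{(p_2,p_3),(p_1,z)\}$ therefore have disjoint intervals, so this set of $\max(4,4)=4$ nodes contains no good quadruple, contradicting the proposition. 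Concretely, the union of all six geodesics is the whole theta graph, which has no $K_4$ minor: the hub $u$ is shared by the three geodesics among $p_1,p_2,p_3$ and can be assigned to only one branch set --- the overlap you flagged is fatal, not a technicality to engineer around. (A similar construction with three hubs and subdivided spokes gives a $K_5$-minor-free graph with five nodes and no good quadruple, so this is not a boundary artifact of $w=4$.) In short, your diagnosis of the crux is sharper than the paper's own proof, which silently assumes the contraction succeeds, but neither your sketch nor the paper's argument is, or can be completed to, a correct proof of the statement; the counterexample also invalidates the observation on the density of good quadruples that the paper builds on top of this proposition.
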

We care about good quadruples $\{(a,b),(c,d)\}$, since they cannot be labeled arbitrarily, e.g., $a,b\in C_1$ and $c,d\in C_2$ is not possible as otherwise one of the two clusters would not be convex. 
For a set $U\subseteq V$, we denote by $Q(U)$ the set of quadruples in the $U$ and by $\Qgood(U)$ the set of good quadruples in $U$. By $q(U)$ and $\qgood(U)$ we denote the size of the respective set. Next, we show that good quadruples exist and that there is a significant number of them for large enough $U$.
\begin{restatable}{obs}{obsA}\label{obs_1}
	Let $G=(V,E)$ be a $K_w$ minor-free graph.
	For any subset $U\subseteq V$ of size at least
	$\max(w,4)$ nodes, the relative number of good quadruples $\qgood(U)/q(U)$ in $U$ is at least $8/w^4$.
\end{restatable}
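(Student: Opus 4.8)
The plan is a double-counting (averaging) argument that bootstraps the \emph{single} good quadruple guaranteed by \Cref{proposition_1} on small subsets into a positive \emph{density} of good quadruples on $U$. Write $m=|U|$ and $s=\max(w,4)$. First I would record the total count: a quadruple is a choice of four distinct nodes together with a way of pairing them, and there are exactly three such pairings, so that $q(U)=3\binom{m}{4}=m(m-1)(m-2)(m-3)/8$. This is where the constant $8$ in the target ratio ultimately originates.

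Next, consider the family of all $\binom{m}{s}$ subsets $S\subseteq U$ with $|S|=s$. By \Cref{proposition_1}, every such $S$ contains at least one good quadruple. I would then count the incidence pairs $(q^*,S)$, where $q^*$ is a good quadruple contained in $S$, in two ways. Summing over $S$ gives at least $\binom{m}{s}$, since each $S$ contributes at least one. On the other hand, each fixed good quadruple $q^*$ uses exactly four nodes and is therefore contained in precisely $\binom{m-4}{s-4}$ of the subsets $S$. Combining the two bounds yields $\qgood(U)\cdot\binom{m-4}{s-4}\ge\binom{m}{s}$, i.e.
\[
\qgood(U)\;\ge\;\frac{\binom{m}{s}}{\binom{m-4}{s-4}}\;=\;\frac{m(m-1)(m-2)(m-3)}{s(s-1)(s-2)(s-3)}\,.
\]
Dividing by $q(U)$ cancels the $m$-dependent factor and leaves $\qgood(U)/q(U)\ge 8/[s(s-1)(s-2)(s-3)]$.

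It then remains to verify $8/[s(s-1)(s-2)(s-3)]\ge 8/w^4$, equivalently $s(s-1)(s-2)(s-3)\le w^4$. When $w\ge 4$ we have $s=w$ and each factor on the left is at most $w$, so the inequality is immediate. The only case needing care is small $w$, where $s=4$ forces the left-hand side to equal $24$; here I would invoke connectivity: a connected graph admitting a subset of size at least $4$ has at least four nodes and hence contains a $K_2$, so $h(G)\ge 2$, and the hypothesis $h(G)<w$ gives $w\ge 3$, whence $w^4\ge 81>24$.

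I do not anticipate a genuine obstacle, as the argument is a clean averaging bound built directly on \Cref{proposition_1}. The only delicate point is the boundary bookkeeping for small $w$, which the connectivity assumption resolves, and the one convention that must be stated carefully is that a quadruple is an \emph{unordered} pairing of four distinct nodes (the source of the factor $3$ in $q(U)$), so that the cancellation against $q(U)$ produces exactly the constant $8$.
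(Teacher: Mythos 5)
Your proof is correct and takes essentially the same approach as the paper: compute $q(U)=3\binom{|U|}{4}$, then double-count incidences between good quadruples and subsets of size $\max(w,4)$ using \Cref{proposition_1}, so that $\qgood(U)\ge\binom{|U|}{\max(w,4)}/\binom{|U|-4}{\max(w,4)-4}$. The only difference is that you handle the boundary case $w\le 4$ explicitly (connectivity forces $w\ge 3$, so $4\cdot3\cdot2\cdot1\le w^4$), whereas the paper's proof only treats $w>4$; this is a minor but welcome piece of extra care, not a different argument.
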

Next, we define good nodes. The intuition is that good nodes participate in many good quadruples with many other nodes. These are the nodes which we are going to select for prediction.
\begin{definition}[$U$-good nodes and $\eps_U$]\label{def:good_node}
	Let $U\subseteq V$. 
    Define
    \[\eps_U =\frac{q_{\mathrm{good}}(U)}{8q(U)}\,.\] 
	A node $a \in U$ is a \emph{$U$-good node}, if there exists a subset $U_a \subseteq U$ of size at least $\lceil 4\eps_U (|U|-1) \rceil,$ such that for all $b \in U_a$, the number of pairs $c,d \in U$ resulting in good quadruples $\{(a,b), (c,d)\}$ is at least
	\[
	\left\lceil 4\eps_U \binom{|U|-2}{2} \right\rceil\,.
	\]    
    If $U$ is clear from context, we might simply say $a$ is a \emph{good node} and also write $\eps=\eps_U$.
\end{definition}

In other words, for a good node $a$ and for each $b\in U_a$ it holds that the shortest paths between $a$ and $b$ intersect with a $\frac{\qgood(U)}{8q(U)}$ fraction of shortest paths with endpoints in $U$. Next, we now show that there exists at least one good node for each large enough set $U$.
\begin{restatable}{obs}{obsB}\label{obs_2}
	Let $G$ be $K_w$ minor-free.
	Then, in any subset $U\subseteq V$ of at least 
	$\max(w,4)$ nodes, each node participating in the maximum number of good quadruples is $U$-good.
\end{restatable}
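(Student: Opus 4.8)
The plan is to consider any node $a^\star \in U$ maximizing the number of good quadruples it participates in, to lower bound this number via a simple averaging (double-counting) argument, and then to show by a counting/pigeonhole argument that $a^\star$ must meet both thresholds in \Cref{def:good_node}. Since the argument only uses that $a^\star$ attains the maximum, it applies to every such maximizer.

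Write $m=|U|$ and, for distinct $a,b\in U$, let $d_{a,b}$ be the number of unordered pairs $\{c,d\}\subseteq U\setminus\{a,b\}$ for which $\{(a,b),(c,d)\}$ is a good quadruple; let $d_a=\sum_{b\neq a} d_{a,b}$ be the number of good quadruples containing $a$ (each such quadruple pairs $a$ with a unique partner $b$, so there is no double counting). Since $q(U)=\tfrac18 m(m-1)(m-2)(m-3)$ and every good quadruple contains exactly four nodes, double counting gives $\sum_{a\in U} d_a = 4\,\qgood(U)$. Taking $a^\star$ to be a maximizer of $d_a$ and substituting $\qgood(U)=8\eps_U q(U)$ then yields
\[
d_{a^\star}\;\geq\;\frac{4\,\qgood(U)}{m}\;=\;\frac{32\,\eps_U\, q(U)}{m}\;=\;4\eps_U (m-1)(m-2)(m-3)\,.
\]

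Next I would set the two thresholds $\tau=\lceil 4\eps_U\binom{m-2}{2}\rceil$ and $\sigma=\lceil 4\eps_U(m-1)\rceil$ from \Cref{def:good_node}, and let $S=\{b\in U\setminus\{a^\star\}: d_{a^\star,b}\geq\tau\}$. The goal is to prove $|S|\geq\sigma$, since then $U_{a^\star}=S$ (or any $\sigma$-subset) witnesses that $a^\star$ is $U$-good. Arguing by contradiction, suppose $|S|\leq\sigma-1$. Bounding each of the at most $\sigma-1$ terms with $b\in S$ by its maximum $\binom{m-2}{2}$ and each of the remaining (at most $m-1$) terms by $\tau-1$, and then using $\sigma-1<4\eps_U(m-1)$ and $\tau-1<4\eps_U\binom{m-2}{2}=2\eps_U(m-2)(m-3)$, I obtain
\[
d_{a^\star}\;\leq\;(\sigma-1)\binom{m-2}{2}+(m-1)(\tau-1)\;<\;4\eps_U(m-1)(m-2)(m-3)\,,
\]
contradicting the lower bound above. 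Hence $|S|\geq\sigma$, as required.

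The routine parts are the two counting identities for $q(U)$ and $\sum_{a} d_a$, which follow directly from the definitions. The step requiring the most care is matching the ceilings in \Cref{def:good_node} to the averaged lower bound: the thresholds $\tau$ and $\sigma$ are each scaled by exactly $4\eps_U$ precisely so that the two contributions in the contradiction bound each evaluate to $2\eps_U(m-1)(m-2)(m-3)$, summing to the lower bound and closing the argument. I would also note that $m\geq\max(w,4)\geq 4$ is needed for $\binom{m-2}{2}$ to be defined and for good quadruples to exist, and that $\qgood(U)>0$ (guaranteed by \Cref{obs_1}) ensures the statement is non-trivial.
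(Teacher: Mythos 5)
Your proof is correct and follows essentially the same route as the paper's: a pigeonhole/double-counting lower bound of $4\qgood(U)/|U|$ on the maximizer's participation, followed by a contradiction obtained by splitting the remaining nodes into those meeting the pair threshold and those below it and upper-bounding the resulting count. The difference is only presentational---you make the arithmetic explicit (the closed form $4\eps_U(|U|-1)(|U|-2)(|U|-3)$ and the two contributions each strictly below $2\eps_U(|U|-1)(|U|-2)(|U|-3)$), whereas the paper carries floors/ceilings with a slack parameter $t$ and defers the final comparison to Observation~\ref{obs_1}.
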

\noindent This leads to the following observation.
\begin{restatable}{obs}{obsC}\label{obs_3}
	Let $G$ be a $K_w$ minor-free graph and $U\subseteq V$ a set with at least $\max(w,4)$ nodes. Then, for a good node $a\in U$ and every $b$ in  $U_a$, there exist at least $\lceil\eps(|U|-2)\rceil$ nodes $c'\in U$ such that for at least $\lceil\eps(|U|-3)\rceil$ of the nodes $d'\in U$, $\{(a,b),(c,d)\}$ forms a good quadruple.
\end{restatable}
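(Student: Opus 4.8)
The plan is to recast the statement as a degree-counting argument in an auxiliary graph and then close it with a simple averaging (Markov-type) bound, exploiting the slack built into the constants. Fix a good node $a$ and some $b\in U_a$. By \Cref{def:good_node}, the number of (unordered) pairs $\{c,d\}\subseteq U\setminus\{a,b\}$ for which $\{(a,b),(c,d)\}$ is a good quadruple is at least $\lceil 4\eps\binom{|U|-2}{2}\rceil$. Writing $m=|U|-2$, I would introduce a simple auxiliary graph $H$ on vertex set $U\setminus\{a,b\}$ (so $|V(H)|=m$), joining $c'$ and $d'$ by an edge exactly when $\{(a,b),(c',d')\}$ is a good quadruple. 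The good-node guarantee then reads $|E(H)|\geq\lceil 4\eps\binom{m}{2}\rceil\geq 4\eps\binom{m}{2}=2\eps m(m-1)$, and the $H$-degree of a vertex $c'$ is exactly the number of nodes $d'$ that form a good quadruple with $(a,b)$ through $c'$. The claim to prove is thus: at least $\lceil\eps m\rceil=\lceil\eps(|U|-2)\rceil$ vertices of $H$ have degree at least $\lceil\eps(m-1)\rceil=\lceil\eps(|U|-3)\rceil$.

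I would then argue by contradiction. Suppose fewer than $\lceil\eps m\rceil$ vertices of $H$ have degree at least $\lceil\eps(m-1)\rceil$; call these the high-degree vertices and let $t$ denote their number, so $t\leq\lceil\eps m\rceil-1<\eps m$. Each high-degree vertex contributes at most $m-1$ to the degree sum (the maximum degree in a simple graph on $m$ vertices), while each of the remaining $m-t$ vertices has degree at most $\lceil\eps(m-1)\rceil-1<\eps(m-1)$. Summing the degrees via the handshake identity,
\[
2|E(H)|=\sum_{c'\in V(H)}\deg_H(c')\leq t(m-1)+(m-t)\bigl(\lceil\eps(m-1)\rceil-1\bigr)<2\eps m(m-1),
\]
where the final strict inequality uses $t<\eps m$ together with $\lceil\eps(m-1)\rceil-1<\eps(m-1)$ and $(m-t)\leq m$. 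This contradicts the lower bound $2|E(H)|\geq 4\eps m(m-1)$ established above, so at least $\lceil\eps m\rceil$ vertices must be high-degree, which is precisely the assertion.

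The combinatorial content is entirely carried by the handshake identity together with the crude per-class degree bounds $m-1$ and $\eps(m-1)$, so I expect no genuine obstacle here. The factor $4$ in the good-node definition (inherited from the $1/8$ in the definition of $\eps_U$) leaves a comfortable factor-two margin between $4\eps m(m-1)$ and $2\eps m(m-1)$, which is exactly what absorbs the two-sided rounding introduced by the ceilings. The only point demanding care is this bookkeeping of ceilings: I would apply $\lceil x\rceil\geq x$ when lower-bounding $|E(H)|$ and $\lceil x\rceil-1<x$ when upper-bounding the low degrees and $t$, and note that $m=|U|-2\geq 2$ (guaranteed by $|U|\geq\max(w,4)\geq 4$) so that $\binom{m}{2}$ and all degree bounds are nonvacuous and the chain of inequalities remains strict.
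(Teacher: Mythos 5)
Your proof is correct and is essentially the paper's own argument: both proceed by contradiction, splitting the candidate nodes $c'$ into those with many versus few good partners $d'$, upper-bounding the resulting count, and contradicting the lower bound $\lceil 4\eps\binom{|U|-2}{2}\rceil$ from the definition of a $U$-good node. Your auxiliary graph $H$ and the handshake identity are just a cleaner packaging (with more careful tracking of the factor of two) of the same degree-counting inequality the paper writes out directly.
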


\begin{algorithm2e}
	\caption{\GoodF\  Algorithm}
	\label{alg:good4}
		\DontPrintSemicolon
    \LinesNumbered
    \SetAlgoNoEnd
	\SetKwInOut{Input}{input}
	\SetNoFillComment
	\Input{Graph $G=(V,E)$}
	$U \leftarrow V$\;
	\While{$|Q_\mathrm{good}(U)| > 0$}{
		\tcc{Step 1: Find a good node $a$}\label{step1}
		\lFor{$a' \in U$}{$Q^{a'}_\mathrm{good}(U) \leftarrow \{\{b,c,d\}: \{(a',b),(c,d)\} \in Q_\mathrm{good}(U)\}$}
		$a \leftarrow \argmax_{a' \in U} \left|Q^{a'}_\mathrm{good}\right|$\;
		\textbf{predict} arbitrarily $\haty(a)$;\quad \textbf{ observe} $y(a)$;\quad $\widetilde{y} \leftarrow y(a)$;\quad $U \leftarrow U \setminus \{a\}$\;
		\tcc{Step 2: Find corresponding node $b$}\label{step2}
		$\mathrm{mistake} \leftarrow \mathrm{False}$\;
		\lFor{$b' \in U$}{$Q^{a,b'}_{\mathrm{good}}(U) \leftarrow \{(c,d): \{(a,b'),(c,d)\} \in Q_\mathrm{good}(U \cup \{a\})\}$}
		\While{$\mathrm{mistake} = \mathrm{False}$ \textbf{ and }$|U| > 0$}{
			$b \leftarrow \argmax_{b' \in U} \left|Q^{a,b'}_{\mathrm{good}}\right|$\;
			\textbf{predict} $\haty(b) = 1 - \widetilde{y}$;\quad \textbf{ observe} $y(b)$;\quad $U \leftarrow U \setminus \{b\}$\;
			\lIf{$\haty(b) \neq y(b)$}{
				$\mathrm{mistake} \leftarrow \mathrm{True}$
			}
		}
		\lIf{$\mathrm{mistake} = \mathrm{False}$}{\textbf{ continue}}
		
		\tcc{Step 3: Find corresponding node $c$}\label{step3}
		$\mathrm{mistake} \leftarrow \mathrm{False}$\;
		\lFor{$c' \in U$}{$Q^{a,b,c'}_{\mathrm{good}}(U) \leftarrow \{d: \{(a,b),(c,d)\} \in Q_{\mathrm{good}}(U \cup \{a,b\})\}$}
		\While{$\mathrm{mistake} = \mathrm{False}$ \textbf{ and } $|U| > 0$}{
			$c \leftarrow \argmax_{c' \in U} \left|Q^{a,b,c'}_{\mathrm{good}}\right|$\;\label{decreasing_order_c}
			\textbf{predict} $\haty(c) = \widetilde{y}$;\quad \textbf{ observe} $y(c)$;\quad $U \leftarrow U \setminus \{c\}$\;
			\lIf{$\haty(c) \neq y(c)$}{
				$\mathrm{mistake} \leftarrow \mathrm{True}$
			}
		}
		\lIf{$\mathrm{mistake} = \mathrm{False}$}{\textbf{continue}}
		
		\tcc{Step 4: Iterate over all corresponding $d$}\label{step4}
		\For{$d \in U$ s.t. $\{(a,b), (c,d)\} \in Q_{\mathrm{good}}(U \cup \{a,b,c\})$}{
			\textbf{predict} $\haty(d) = \widetilde{y}$;\quad \textbf{ observe} $y(d)$;\quad $U \leftarrow U \setminus \{d\}$\;
			\lIf{$\haty(d) \neq y(d)$}{\textbf{break}}
		}
	}
	\tcc{Step 5: Predict remaining labels}
	\textbf{predict} arbitrary labels for any remaining nodes in $U$\;       
\end{algorithm2e}

Our high-level idea is as follows. Using $U$-good nodes we want to either learn a large number of node labels without mistakes or on mistake discard a $\eps$-fraction of good quadruples. That way we can employ a Halving-like strategy on the set of all quadruples.

\begin{figure}[!htb]
	\centering
	\begin{tikzpicture}[scale=0.4]
		\GraphInit[vstyle=Classic]
		\SetGraphUnit{2}
		\SetVertexNoLabel
		\Vertex[Math,x=0, y=-3]{d_5}
		\Vertex[Math,x=1.8, y=-3]{d_6}
		\Vertex[Math,x=3.6, y=-3]{d_7}
		\Vertex[Math,x=5.4, y=-3]{d_8}
		\SetVertexLabel
		\SetUpVertex[FillColor=orange, LineWidth=1pt, MinSize=0.1pt]
		\Vertex[Math,x=0, y=0]{a}
		\Vertex[Math,x=4, y=1]{c_1}
		\Vertex[Math,x=14, y= 1]{b_3}
		\Vertex[Math,x=8, y=-3]{d_1}
		\Vertex[Math,x=9.8, y=-3]{d_2}
		\Vertex[Math,x=11.6, y=-3]{d_3}
		\Vertex[Math,x=13.4, y=-3]{d_4}
		
		\SetGraphUnit{3}
		\GraphInit[vstyle=Classic]
		\SetUpVertex[FillColor=darkblue, LineWidth=0pt, MinSize=0.1pt]
		\Vertex[Math,x=14, y=5]{b_1}
		\Vertex[Math,x=14, y=3]{b_2}
		\Vertex[Math,x=9, y=1]{c_2}
		
		\Edge[style={bend left}](a)(b_1)
		\Edge[style={bend left}](a)(b_2)
		\Edge[style={bend right}](a)(b_3)
		\Edge[style={bend left}](c_1)(d_5)
		\Edge[style={bend left}](c_1)(d_6)
		\Edge[style={bend left}](c_1)(d_7)
		\Edge[style={bend left}](c_1)(d_8)
		\Edge[style={bend left=20}](c_2)(d_1)
		\Edge[style={bend left=20}](c_2)(d_2)
		\Edge[style={bend left=20}](c_2)(d_3)
		\Edge[style={bend left=20}](c_2)(d_4)
		
	\end{tikzpicture}
	\caption{An example illustrating how \GoodF\ operates. Here curves denote shortest paths and two crossing curves are a good quadruple. Using the good quadruples $\{(a,b_3),(c_2,d_i)\}$ for $i\in\{1,\dots,4\}$ we can infer the labels of the nodes $d_i$.}
	\label{figure:good_node_alg}
\end{figure}
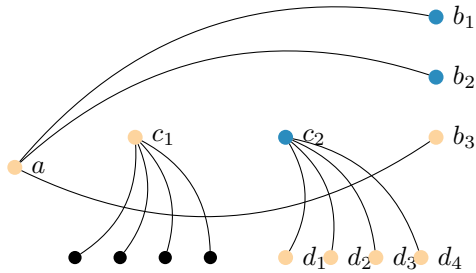

The main loop of \GoodF\ (\Cref{alg:good4}) continues as long as there are good quadruples in the set of unlabeled nodes $U$, i.e., $|Q_\mathrm{good}(U)| > 0$.  Each iteration of this loop involves the following four main steps, constructing good quadruples $\{(a,b),(c,d)\}$.
\textbf{Step 1: find a good node $a$}. We find a good node $a$, predict an arbitrary label for it and observe the true label $y(a)$.
\textbf{Step 2: find a corresponding node $b$.}
The algorithm %
processes each node in $U$ and finds the node $b$ that together with $a$ participates in the maximum number of good quadruples $\{(a,b),(c,d)\}$. The algorithm predicts for $b$ the opposite label of $a$, $\haty(b) = 1 - y(a)$, observes the true label $y(b)$, and removes $b$ from $U$ (see Figure~\ref{figure:good_node_alg} %
for an example). The algorithm repeats this step as long as we do not make any mistake and $U$ is not empty. This process is referred to as \textit{selecting nodes $b'$ in decreasing order}. %
\textbf{Step 3: find a corresponding node $c$}. If the prediction for any node $b$ was incorrect, the algorithm searches within $U$ for another node $c$ which together with $a$ and this node $b$ participates in the maximum number of good quadruples $\{(a,b),(c,d)\}$. The algorithm predicts $\haty(c) = y(a)$ for $c$, observes the true label $y(c)$, and removes $c$ from $U$. The algorithm repeats this step while $\haty(c)=y(c)$ and $U$ is not empty. This process is referred to as \textit{selecting nodes $c'$ in decreasing order}. If $U$ becomes empty we exit from the algorithm.
\textbf{Step 4: predict labels for all such quadruples.} If the prediction for any node $c$ was incorrect, the algorithm predicts the label $y(a)$ for each node $d$, where $\{(a,b),(c,d)\}$ forms a good quadruple, learns the true label of $d$ and removes $d$ from $U$. If we made a prediction mistake we stop with this step. %
\textbf{Step 5: predict remaining labels.}
If no more good quadruples exist while the graph is still not fully labeled, we predict arbitrary labels for the remaining nodes in $U$. This corresponds to the case when the induced subgraph $G[U]$ is exactly a complete graph $K_{|U|}$ or $|U|\leq 3$.

Overall, this leads to our main result, which is the following bound on the number of mistakes of \GoodF\ for learning halfspaces.
\begin{restatable}[Mistake upper bound]{thm}{mainUpper}\label{thm:convex_case}
	Let $G=(V,E)$ be a graph with $n=|V|$ and let $\scH$ be the set of convex bipartitions of $G$.
	Then, \GoodF\  
	(\Cref{alg:good4})
	runs in polynomial time in $n$ and makes $\Msd(\GoodF , \scH)\leq 3(h(G)+1)^4\ln n$ mistakes. %
\end{restatable}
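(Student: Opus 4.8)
The plan is to write the mistake bound as (mistakes per iteration of the main loop) times (number of iterations), and to obtain polynomial runtime from the fact that every iteration deletes at least the node $a$ and performs only polynomially much work. First I would pin down the per-iteration mistake count. An iteration advances to Step 3 only when Step 2 records a mistake (otherwise the algorithm hits \textbf{continue}), and to Step 4 only when Step 3 records a mistake; hence a full iteration makes at most one mistake in each of Steps 1, 2, 3 and---by the claim below---none in Step 4, i.e.\ at most three in total. Any iteration that stops earlier must have emptied $U$ (the only other way a while-loop exits with $\mathrm{mistake}=\mathrm{False}$), so it ends the main loop; there is thus at most one such iteration, costing at most two mistakes.

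\textbf{Step 4 is mistake-free (convexity).}
The heart of the per-iteration bound is that Step 4 never errs, and here I would invoke convexity together with realizability. On entering Step 4 the Step-2 mistake certifies $y(b)=y(a)=\widetilde{y}$, so $a,b\in C_{\widetilde{y}}$, while the Step-3 mistake certifies $y(c)=1-\widetilde{y}$, so $c\in C_{1-\widetilde{y}}$. Convexity of $C_{\widetilde{y}}$ gives $I(a,b)\subseteq C_{\widetilde{y}}$. If some $d$ with $\{(a,b),(c,d)\}\in\Qgood(U\cup\{a,b,c\})$ had $y(d)=1-\widetilde{y}$, then $c,d\in C_{1-\widetilde{y}}$ and convexity would give $I(c,d)\subseteq C_{1-\widetilde{y}}$; but goodness of the quadruple means $I(a,b)\cap I(c,d)\neq\emptyset$, placing a node in both clusters---a contradiction. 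Thus every such $d$ has $y(d)=\widetilde{y}$, which is exactly the label Step 4 predicts, so no mistake occurs.

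\textbf{Bounding the number of iterations (the main obstacle).}
This is where the difficulty lies, and I would handle it by a Halving-type argument on the potential $\qgood(U)$. The target claim is that every iteration reaching Step 4 discards at least a $4\eps_U$-fraction of the current good quadruples, where $\eps_U=\qgood(U)/(8q(U))$. The good-node machinery is what forces this: by \Cref{obs_2} the node $a$ from Step 1 is $U$-good, so by \Cref{def:good_node} it has a large witness set $U_a$ of high-participation partners, and by \Cref{obs_3} each $b\in U_a$ yields many nodes $c'$ that each form good quadruples with many nodes $d$. The decreasing-order selection in Steps 2 and 3 guarantees that the partners the algorithm actually fixes have near-maximal participation among the surviving nodes; consequently, in each iteration either a large batch of high-participation nodes is correctly labeled and deleted, or the mistake node has high participation and Step 4 deletes a correspondingly large batch of nodes $d$. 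Since deleting any node removes every good quadruple incident to it---in particular all good quadruples through the deleted good node $a$---a count of these deletions against $q(U)$ should yield the $4\eps_U$-fraction. The delicate point, which I expect to be the technical core, is exactly this accounting: reconciling the \emph{many correct deletions} branch with the \emph{high-participation mistake} branch, avoiding double counting, and verifying that the constant is indeed $4$.

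\textbf{Assembling the bound.}
Finally I would convert the fractional decrease into an iteration count. By \Cref{obs_1}, whenever $|U|\ge\max(w,4)$ with $w=h(G)+1$ we have $\eps_U\ge 1/w^4$, so each qualifying iteration multiplies $\qgood$ by at most $1-4/w^4\le e^{-4/w^4}$. Starting from $\qgood(V)\le q(V)=3\binom{n}{4}<n^4$, after $T$ such iterations $\qgood<n^4 e^{-4T/w^4}$, which drops below $1$---and hence halts the loop---once $T\ge w^4\ln n$. The at most $w-1$ iterations with $|U|<w$ and the single terminal iteration contribute only lower-order terms absorbed by the slack in this estimate, so the total is at most three mistakes times $w^4\ln n$, giving $\Msd(\GoodF,\scH)\le 3(h(G)+1)^4\ln n$. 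Polynomial runtime is immediate: each iteration deletes at least $a$, so there are at most $n$ iterations, and assembling $\Qgood(U)$ reduces to one all-pairs shortest-path computation followed by testing each of the $\scO(n^4)$ quadruples via $x\in I(u,v)\iff d(u,x)+d(x,v)=d(u,v)$.
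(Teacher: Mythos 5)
Two of your three pillars are correct and match the paper's own argument: the per-iteration mistake count (at most one mistake in each of Steps 1--3, and Step 4 mistake-free because the Step-2 mistake certifies $y(b)=y(a)=\widetilde{y}$, the Step-3 mistake certifies $y(c)=1-\widetilde{y}$, and a good quadruple $\{(a,b),(c,d)\}$ with $y(d)=1-\widetilde{y}$ would place a node of $I(a,b)\cap I(c,d)$ in both convex clusters), and the runtime argument. The gap is the third pillar, which you yourself defer as ``the technical core'': the claim that every iteration discards at least a $4\eps_U$-fraction of $\Qgood(U)$. That claim is never established in your write-up, and it is attached to the wrong potential, so the accounting cannot be completed as you set it up.

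What the machinery of \Cref{def:good_node}, \Cref{obs_2}, and \Cref{obs_3} actually yields is a guarantee on deleted \emph{nodes}, and this is the paper's invariant: either Step 2 empties $U$; or its mistake occurs after the first $\lceil 4\eps_U(|U|-1)\rceil$ selections, which were then already learned correctly; or the mistake node $b$ lies in $U_a$, in which case by \Cref{obs_3} either Step 3 learns its first $\lceil\eps_U(|U|-2)\rceil$ selections correctly or Step 4 learns at least $\lceil\eps_U(|U|-3)\rceil$ nodes $d$ mistake-free. In every case roughly $\eps_U|U|$ nodes leave $U$, so $|U|$ --- equivalently $q(U)=3\binom{|U|}{4}$ --- decays geometrically. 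This does \emph{not} transfer to $\qgood(U)$: Steps 2 and 3 rank candidates by participation with $a$ (respectively with the pair $(a,b)$), not by total participation in $\Qgood(U)$; summed over all partners, those counts cover only the good quadruples through $a$, which the averaging argument guarantees to be merely a $4/|U|$-fraction of $\Qgood(U)$ --- far below $4\eps_U$ once $|U|\gg (h(G)+1)^4$. Nothing prevents the surviving nodes from retaining essentially all good quadruples, so the decay of $\qgood$ does not follow from the deletions you exhibit; your own phrase ``a count of these deletions against $q(U)$'' is the tell, since such a count controls $q(U)$, not $\qgood(U)$. The repair is to run your final calculation with $q(U)$ (or $|U|$) as the potential --- your $n^4e^{-4T/w^4}$ arithmetic, with $w=h(G)+1$, then goes through and the proof becomes the paper's --- and to replace ``absorbed by the slack'' with explicit arithmetic, since the target bound has no additive room: the paper closes it via the inequality $3(h(G)+1)^4\ln\frac{n}{(h(G)+1)^4+3}+(h(G)+1)^4+7\le 3(h(G)+1)^4\ln n$.
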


We also developed a multiclass variant of \GoodF\ leading again to an algorithm with $\poly(n)$ runtime and at most a logarithmic number of mistakes in $n$ for fixed $h(G)$, see \Cref{sec:multi-class}.

\paragraph{Correctness.}
Here, we give an overview of the proof, for a full analysis see \Cref{section:analysis_and_proof}. 
By \Cref{obs_1} we know that $\qgood(U)/q(U)$ is at least $\frac{8}{(h(G)+1)^4}$ for a sufficiently large $U$. Next, using \Cref{obs_2}, we see that $U$-good node exist for sufficiently large $U$ and 
\[
\eps=\frac{\qgood(U)}{8q(U)}\ge \frac{1}{(h(G)+1)^4}\,.
\]
As $a$ is a good node, there is a large number of nodes $b'$, such that there are at least $4\eps\binom{U|-2}{2}$ good quadruples $\{(a,b'),(c,d)\}$. In \textbf{Step 1}, we then simply find a node $a$ participating in the maximum number of good quadruples, which has to be $U$-good (for large enough $U$). %
In \textbf{Step 2}, we either discover at least $\eps|U|$ labels or make a mistake on a node $b$ and move to Step 3. In the latter case as we selected the nodes $b'$ in decreasing order, the node $b$ and the $\lceil4\eps(|U|-1)\rceil$ selected first all belong to the set $U_a$ from \Cref{def:good_node}. Therefore, by \Cref{obs_3} for $b$, there exist at least $\lceil\eps(|U|-2)\rceil$ nodes $c$, such that for at least $\lceil\eps(|U|-3)\rceil$ of the nodes $d$, the quadruple $\{(a,b), (c,d)\}$ is good. %
In \textbf{Step 3}, we take nodes $c'$ in decreasing order. If we do not make mistakes among the first $\lceil\eps(|U|-2)\rceil$ nodes $c'$, we discover at least $\eps|U|$ labels in total; otherwise, we find a node $c$ among the first where we made a prediction mistake and go to Step 4.  
In \textbf{Step 4}, we use this node $c$. We know that there are many nodes $d'$ such that $\{(a,b),(c,d')\}$ forms a good quadruple. So, we predict label $y(a)$ for these $d'$. By convexity, we will make no mistake on all such nodes. %
Finally, we conclude that in total, at least $\eps|U|$ node labels were inferred during these four steps. Thus, we overall have $\scO(\log_{1-\eps}(n))=\scO(h(G)^4\ln n)$ iterations of the while loop.

\subsection{Learning near-convex labelings}\label{sec:near_convex}
In real-world scenarios, it is unrealistic to expect the node labeling to be convex.
Even if in particular tasks we can expect convex labelings, deviations from the ideal convex bipartition are always possible.
Therefore, let us consider a setting where the node labeling is not convex, but, in some sense, close to convex. To measure the deviation from convexity, we introduce a very natural concept.
We call the labeling \textit{$M^*$-near-convex} if it can be converted into a convex labeling by flipping at most $M^*$ nodes. This is related to agnostic online learning \citep{ben2009agnostic}. %
\begin{restatable}{thm}{nonconvexcase}\label{thm:non-convex_case_1}
	\GoodF\ can predict all labels in $G$ with at most $4M^*+3(h(G)+1)^4\ln n$ mistakes, where $M^*$ is the smallest integer such that the labeling of $G$ is $M^*$-near-convex.
\end{restatable}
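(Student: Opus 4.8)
The plan is to reuse the analysis behind \Cref{thm:convex_case} essentially verbatim and to pay only a bounded surcharge for the non-convex nodes. Fix a set $S\subseteq V$ with $|S|=M^*$ whose relabeling turns $y$ into a convex labeling $y'$, so that $y$ and $y'$ agree exactly outside $S$. The first thing to note is that \Cref{obs_1,obs_2,obs_3}, together with the bound $\eps\ge (h(G)+1)^{-4}$ and the size guarantees in \Cref{def:good_node}, are statements about the graph and its good-quadruple counts only; they are independent of the labeling and hence hold unchanged. Consequently the per-iteration structure of \GoodF\ is unaffected: Steps~1--3 each make at most one mistake, and Step~4 makes at most one mistake since it breaks on its first error. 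I would classify every iteration of the while loop as \emph{Type~A} if Step~4 makes no mistake in it and \emph{Type~B} otherwise, and bound the two contributions separately.

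For Type~B iterations I would argue that each can be charged, injectively, to a distinct node of $S$. In such an iteration let $a$ be the good node of Step~1, $b$ the node whose misprediction triggers Step~3, $c$ the node whose misprediction triggers Step~4, and $d$ the node on which Step~4 errs; these four are distinct and all get removed from $U$ during the iteration. The misprediction rules force $y(a)=y(b)=\widetilde y$ and $y(c)=1-\widetilde y$. I claim at least one of $a,b,c,d$ lies in $S$. Indeed, if all four are outside $S$ then $y'$ agrees with $y$ on them, so $y'(a)=y'(b)=\widetilde y$ and $y'(c)=1-\widetilde y$; as $\{(a,b),(c,d)\}$ is good, some node $x$ lies on a shortest $a$--$b$ path and on a shortest $c$--$d$ path, and convexity of $y'$ forces $y'(x)=\widetilde y$ (since $x\in I(a,b)$ lies between two $\widetilde y$-labeled nodes), whence $y'(d)=\widetilde y$ (otherwise $x\in I(c,d)$ would lie between two $(1-\widetilde y)$-labeled nodes and inherit label $1-\widetilde y$). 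But then $y(d)=y'(d)=\widetilde y$, contradicting that Step~4 erred on $d$. Charging the iteration to a node of $S$ among $a,b,c,d$ and using that this node is removed and never reenters $U$, distinct Type~B iterations get distinct charges; hence there are at most $M^*$ of them, contributing at most $4M^*$ mistakes in total.

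For Type~A iterations I would first show they are exactly the \emph{productive} ones (those discovering at least $\eps|U|$ labels). If an iteration discovers fewer than $\eps|U|$ labels, then Steps~2 and~3 must have ended on early mistakes, so $b\in U_a$ and $c$ is one of the good $c'$ of \Cref{obs_3}; therefore Step~4 has at least $\lceil\eps(|U|-3)\rceil$ candidate nodes $d$, and were Step~4 mistake-free it would remove all of them, making the iteration productive (here I use $\eps\le 1/8$, so $1+\eps(|U|-3)\ge\eps|U|$). Hence a non-productive iteration is Type~B, so every Type~A iteration is productive and shrinks $|U|$ by a factor at most $1-\eps$; since all remaining iterations only shrink $|U|$ further, the number of Type~A iterations is at most $\log_{1/(1-\eps)}n\le \ln n/\eps\le (h(G)+1)^4\ln n$. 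Each contributes at most three mistakes (Steps~1--3).

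Summing up, the Type~A iterations and the final Step~5 (arbitrary predictions on a residual clique of size at most $h(G)+1$, exactly as in the convex case) contribute at most $3(h(G)+1)^4\ln n$ mistakes, identical to \Cref{thm:convex_case}, while the Type~B iterations add at most $4M^*$; this yields $\Msd(\GoodF,y)\le 4M^*+3(h(G)+1)^4\ln n$, and the runtime is unchanged since the algorithm is the same. The main obstacle is the Type~B charging argument: proving that a Step~4 error forces one of the four anchor nodes into $S$, and that the charge is injective, so that the fourth mistake per such iteration is paid for by a \emph{fixed} node of $S$ rather than accumulating over iterations. A secondary subtlety is verifying that the interspersed Type~B iterations do not inflate the Type~A count, which rests precisely on the equivalence \emph{non-productive} $\iff$ \emph{Step~4 mistake} established via \Cref{obs_3}.
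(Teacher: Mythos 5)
Your proposal is correct and follows essentially the same route as the paper's own proof: both rest on the dichotomy that each iteration either learns an $\eps$-fraction of $U$ with at most three mistakes (bounded exactly as in \Cref{thm:convex_case}) or, with at most four mistakes, exposes a good quadruple labeled $\widetilde y,\widetilde y,1-\widetilde y,1-\widetilde y$, which must contain a node of the flip set $\mathcal{M}^*$ and is removed from $U$, so such rounds number at most $M^*$. Your write-up is in fact somewhat more explicit than the paper's (spelling out the intersection-node contradiction showing a violating quadruple meets $\mathcal{M}^*$, and the injectivity of the charging), but the decomposition and key ideas are identical.
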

\noindent We emphasize that we do not need to know $M^*$ in advance.

\subsection{Lower bounds}\label{sec:lower}
Let us first show that the dependence on the Hadwiger number, without relying on further graph parameters, is unavoidable in general.

\begin{restatable}{prop}{simpleLower}\label{prop:simple_lower}
	For any $h,n\in\nat$ with $h\leq n$, there exists a graph $G$ with $h(G)=h$, $|V|=n$, and convex bipartitions $\scH$, such that $\Msd(\scH)=\Omega(h)$. 
\end{restatable}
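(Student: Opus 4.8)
The goal is to exhibit, for every pair $h \leq n$, a graph $G$ on $n$ nodes whose Hadwiger number is exactly $h$ and whose family of convex bipartitions $\scH$ forces any self-directed learner to make $\Omega(h)$ mistakes. The natural candidate is to take a clique $K_h$ and attach the remaining $n - h$ nodes in a way that does not increase the Hadwiger number. Since a clique $K_h$ already contains a $K_h$ minor and any simple connected graph on $n \geq h$ nodes built by hanging a path or tree of the remaining $n-h$ vertices off one clique vertex keeps $h(G) = h$ (trees and paths contribute no larger clique minor, and attaching them by single edges cannot create new minors spanning the clique), this construction realizes the prescribed parameters. The essential action happens inside the clique.

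The plan is to argue that on the clique $K_h$ every bipartition into two nonempty parts is convex, because in a complete graph every pair of nodes is adjacent, so all geodesic intervals $I(u,v)$ equal $\{u,v\}$ and trivially stay within either side of any partition. Hence $\scH$ restricted to the clique contains all $2^h$ subsets (equivalently, the VC dimension of $\scH$ on the clique vertices is $h$), giving a very rich hypothesis space with no convexity constraints linking the labels of distinct clique nodes. This is the crux: on a clique the labels are mutually unconstrained, so knowing the labels of some clique nodes reveals nothing about the others.

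From here I would invoke the information-theoretic core of self-directed learning. Because the labels of the $h$ clique vertices are logically independent under $\scH$, an adversary can answer each of the learner's first predictions on a clique node with the opposite label, regardless of which node the learner adaptively selects and in which order. Concretely, I would run the standard adversary argument: whenever the learner selects and predicts a clique vertex whose label is still unconstrained, the adversary returns the opposite of the prediction, which remains consistent with some convex bipartition since every $\pm$ assignment to the clique is convex. This forces one mistake per clique vertex until all $h$ are exhausted, yielding $\Omega(h)$ mistakes. One can equally phrase this via \Cref{prop:mistake_vs_vc}: the lower bound $\Msd(\scH) \geq \Omega(\vc(\scH)/\ln n)$ together with $\vc(\scH) \geq h$ gives $\Msd(\scH) = \Omega(h/\ln n)$, but the direct adversary argument on the clique is cleaner and avoids the $\ln n$ loss, delivering the sharper $\Omega(h)$ claimed in the statement.

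The main obstacle is not the lower bound mechanism itself but verifying the two graph-theoretic side conditions simultaneously: that the augmented graph has Hadwiger number exactly $h$ (not larger, which requires checking that the attached part creates no bigger clique minor, and not smaller, which is immediate since it contains $K_h$) and that extending a clique bipartition to a convex bipartition of the whole $G$ is always possible so that the adversary's answers stay realizable within $\scH$ on all of $V$. I would handle the latter by assigning every attached (non-clique) node a fixed label, say the label of the clique vertex it hangs from propagated along the tree, and check that this keeps each cluster convex; since shortest paths between two clique nodes never leave the clique and shortest paths into the pendant tree are unique, no new interval constraints are violated. Making this extension argument precise for the chosen attachment is the only genuinely technical point.
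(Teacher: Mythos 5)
Your proposal is correct and follows essentially the same route as the paper: the paper's proof also takes a clique $K_h$ with a path on the remaining $n-h$ nodes attached to one clique vertex, fixes the pendant part to one cluster, and lets an adversary flip the learner's predictions on the clique vertices (whose labels are unconstrained by convexity, since intervals between adjacent nodes are trivial) to force $\Omega(h)$ mistakes. Your additional care about keeping the attachment vertex's label consistent with the pendant path is a valid way to make the realizability of the adversary's answers precise; it costs at most one mistake (on the attachment vertex) and does not affect the $\Omega(h)$ bound.
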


\noindent For $S_4$ graphs we get the following lower bound.
\begin{restatable}{prop}{sFourLower}\label{prop:s4_lower}
	Let $G$ be an $S_4$ graph with $n$ nodes. Then, any algorithm learning convex bipartitions of $G$ will make $\Omega\left(\frac{\omega(G)}{\ln n}\right)$ in the worst-case.
\end{restatable}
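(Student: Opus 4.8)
The plan is to reduce the statement to a lower bound on the VC dimension of the class $\scH$ of convex bipartitions and then invoke \Cref{prop:mistake_vs_vc}. Since that proposition gives $\Msd(\scH)\ge\Omega(\vc(\scH)/\ln n)$ with $n=|V|$, it suffices to prove $\vc(\scH)\ge\omega(G)$. I would establish this by exhibiting a shattered set of size $\omega(G)$, namely a maximum clique of $G$.

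The key geometric observation is that inside a clique all relevant geodesic intervals are trivial: if $u,v$ are adjacent, then the only shortest $u$-$v$ path is the edge itself, so $I(u,v)=\{u,v\}$. Let $K$ be a clique with $|K|=\omega(G)$. Then every subset $S\subseteq K$ is convex in $G$, because for any $a,b\in S$ we have $I(a,b)=\{a,b\}\subseteq S$; in particular $\conv(S)=S$. Consequently, for any $S\subseteq K$, writing $T=K\setminus S$, both $S$ and $T$ are convex and $\conv(S)\cap\conv(T)=S\cap T=\emptyset$.

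Now I would apply the $S_4$ separation axiom to the disjoint pair $S,T$: it yields a halfspace $H\subseteq V$ (i.e., a convex bipartition) with $S\subseteq H$ and $T\subseteq V\setminus H$, whence $H\cap K=S$. As $S$ ranges over all $2^{\omega(G)}$ subsets of $K$ (the degenerate cases $S=\emptyset$ and $S=K$ being realized by the trivial halfspaces $\emptyset$ and $V$), every labeling of $K$ is induced by some member of $\scH$. Hence $K$ is shattered, so $\vc(\scH)\ge\omega(G)$. Combining this with \Cref{prop:mistake_vs_vc} gives $\Msd(\scH)\ge\Omega(\omega(G)/\ln n)$, which is the claimed bound (and since the claim concerns the worst case, it holds for every learning algorithm).

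The step I would check most carefully—and which I expect to be the only real subtlety—is the application of $S_4$, since it is precisely what converts the disjointness of the convex hulls $\conv(S)$ and $\conv(T)$ into an \emph{actual} separating halfspace of the whole graph $G$. Establishing that the subsets of the clique are globally convex and disjoint is routine given the triviality of intervals between adjacent nodes; without the $S_4$ hypothesis, however, one would only know that the two parts are convex and disjoint, which does not by itself guarantee a convex bipartition separating them, so the hypothesis that $G$ is $S_4$ is used exactly here and nowhere else.
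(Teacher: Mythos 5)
Your proposal is correct and follows essentially the same route as the paper's own proof: invoke the $\Omega\left(\vc(\scH)/\ln n\right)$ lower bound of \citet{ben1997online} (\Cref{prop:mistake_vs_vc}) and show that a maximum clique is shattered by the halfspaces of an $S_4$ graph, giving $\vc(\scH)\geq\omega(G)$. The paper states the shattering claim without detail ("it is easy to see"), and your argument---trivial intervals between adjacent nodes make every subset of the clique convex, so $S_4$ separation yields the required halfspace for each dichotomy---is precisely the justification it leaves implicit.
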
%
Thus, for the broad family of $S_4$ graphs, we see that the mistake bound is largely determined by the denseness of the graph, here in terms of the clique number $\omega(G)$. Only the gap between the clique number and Hadwiger number remains. As we discuss in \Cref{sec:specific_graphs}, this gap is typically quite small for many graph families, such as chordal or bounded treewidth graphs.

\subsection{Bounds for specific graph 
	families}\label{sec:specific_graphs}
We discuss some broad families of graphs, where we achieve near-optimal mistake bounds.

\textit{Bounded treewidth graphs.}
Treewidth of a graph is a measure of ``tree-likeness'' and gives an upper bound on the Hadwiger number $h(G)\leq \tw(G)+1$. Many common graph families have bounded treewidth. For example, trees, $k$-outerplanar graphs, Halin graphs, and series-parallel graphs all have constant treewidth~\citep[see, e.g.,][]{bodlaender1993tourist} and hence also constant Hadwiger number. Thus, on all such graphs we will make only $\scO(\ln n)$ many mistakes using \GoodF. Hyperbolic random graphs have a treewidth of $\scO\left((\ln n)^2\right)$ with high probability if the degree of the power law is chosen as $\beta\geq 3$ \citep{blasius2016hyperbolic}. Here, we achieve a mistake bound of $\scO\left((\ln n)^3\right)$.

\textit{Planar graphs.}
Planar graphs are a well known and broad graph family. They have Hadwiger number at most 4. For planar graphs we can actually enumerate all halfspaces in polynomial time using the algorithm of \citet{glantz2017on}. This allows to run \textsc{Halving} in polynomial time and achieve a near-optimal mistake bound $\scO(\ln n)$. As the Hadwiger number is a constant for planar graphs our algorithm \GoodF\  achieves the same bound. The downside of the approach of \citet{glantz2017on} is that it is particularly tailored towards planar graphs and it seems rather non-trivial to generalize it to more general families. Our algorithm \GoodF\ can be applied on all graphs, and in particular, on many near-planar graph families such as apex graphs it still achieves a mistake bound of $\scO(\ln n)$.

\textit{Graphs with $h(G)\approx\omega(G)$.}
A graph $G$ is chordal if $G$ contains no induced cycles of size four or larger. %
Chordal graphs form a large graph family where $\omega(G)=h(G)$. 
\begin{restatable}{prop}{chordal}
	Let $G$ be chordal. Then, $\omega(G)=h(G)$.
\end{restatable}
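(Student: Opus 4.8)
The plan is to prove the two inequalities $\omega(G)\le h(G)$ and $h(G)\le\omega(G)$ separately. The first is immediate and holds for \emph{every} graph: a clique of size $\omega(G)$ is in particular a clique \emph{minor} of size $\omega(G)$ (take singleton branch sets), so $\omega(G)\le h(G)$. All the content lies in the reverse inequality, which is where chordality enters. Here I would exploit the fact that a chordal graph admits a \emph{clique tree}, i.e., a tree decomposition $(T,\{X_t\}_{t\in V(T)})$ in which every bag $X_t$ induces a clique of $G$.

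For the reverse direction, suppose $G$ has a $K_s$-minor, witnessed by pairwise disjoint, connected branch sets $B_1,\dots,B_s\subseteq V$ with an edge between $B_i$ and $B_j$ for all $i\ne j$. First I would record the standard fact that for each $i$ the set of bags meeting $B_i$, namely $T_i=\{t\in V(T): X_t\cap B_i\ne\emptyset\}$, induces a connected subtree of $T$: this follows because the bags containing any single vertex form a subtree (by the tree-decomposition axioms) and adjacent vertices of the connected set $B_i$ always lie together in some common bag. Next, for any $i\ne j$ the edge joining $B_i$ to $B_j$ lies inside some bag $X_t$, so that bag meets both $B_i$ and $B_j$; hence $t\in T_i\cap T_j$ and the subtrees $T_1,\dots,T_s$ pairwise intersect. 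By the Helly property of subtrees of a tree, they share a common node $t^\ast$, so the bag $X_{t^\ast}$ contains at least one vertex from each of the $s$ disjoint branch sets, and therefore $\abs{X_{t^\ast}}\ge s$. Since $X_{t^\ast}$ is a clique, $\omega(G)\ge s$. Taking $s=h(G)$ yields $h(G)\le\omega(G)$, and combining with the first inequality gives $\omega(G)=h(G)$.

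As a shortcut, the same conclusion can be reached by invoking the already-stated bound $h(G)\le\tw(G)+1$ together with the classical identity $\tw(G)=\omega(G)-1$ for chordal graphs (the clique tree realizes width $\omega(G)-1$, while every clique must sit inside a single bag, forcing width at least $\omega(G)-1$); then $h(G)\le\tw(G)+1=\omega(G)\le h(G)$. I expect the main technical point to be the justification that each branch set occupies a connected subtree and the subsequent application of the Helly property for subtrees: both are standard, but they are the only places where the argument uses more than the definition of a minor, and they are precisely where chordality---through the clique bags---does the work.
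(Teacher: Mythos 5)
Your proposal is correct, and it is worth noting that your closing ``shortcut'' is in fact exactly the paper's proof: the paper argues $h(G)\le\tw(G)+1$ (treewidth is minor-monotone and $\tw(K_c)=c-1$, cited from Diestel), invokes the classical identity $\omega(G)=\tw(G)+1$ for chordal graphs, and combines this with the trivial $\omega(G)\le h(G)$. Your main argument, by contrast, is a genuinely different and more self-contained route: rather than citing minor-monotonicity of treewidth, you inline its proof for the case of clique minors, taking a clique tree of the chordal graph, observing that the bags meeting each branch set $B_i$ of a $K_s$-minor span a subtree of the clique tree, noting these subtrees pairwise intersect (any edge between $B_i$ and $B_j$ lies in a common bag), and applying the Helly property of subtrees to find one bag---hence one clique---meeting all $s$ branch sets, giving $\omega(G)\ge s$. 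Both steps you flag as the technical core (connectedness of the bag-subtrees and the Helly property) are standard and correctly applied. What your direct version buys is self-containedness and an explicit view of where chordality does the work (the bags are cliques); what the paper's version buys is brevity, at the price of resting on two cited facts. Either argument is a complete and valid proof of the proposition.
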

\noindent  Another example are \emph{circular-arc graphs}, the intersection graphs of arcs on a circle, where $h(G)\leq 2\omega(G)$ \citep{narayanaswamy2007note}.

\begin{corollary}\label{cor:chordal}
	Let $G=(V,E)$ be a chordal graph or circular-arc graph with $n=|V|$ and let $\scH$ be the convex bipartitions of $G$.
	Then, \GoodF\ makes $\Msd(\GoodF, \scH)=\scO(\omega(G)^4\ln n)$ mistakes.
\end{corollary}
We see that the mistake complexity is largely determined by $\omega(G)$ for chordal graphs and by combining this with Proposition~\ref{prop:s4_lower}, we see that this dependence is also necessary in general.

\textit{Bipartite graphs.} Bipartite graphs are graphs without cycles of odd length.
\begin{restatable}{prop}{bipartite}
	Let $G$ be a bipartite graph and $\scH$ its convex bipartitions. We can learn any labeling in $\scH$ with at most 2 mistakes in linear time. 
\end{restatable}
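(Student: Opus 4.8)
The plan is to exploit a strong rigidity property of convex bipartitions of bipartite graphs: a \emph{single} oriented cut-edge already determines the entire labeling through distances. The crux is the following structural lemma, which I would prove first. Let $(C_1,C_2)$ be a convex bipartition and let $\{u,w\}$ be a cut-edge with $u\in C_1$ and $w\in C_2$. Then for every node $v\in V$,
\[
v\in C_1 \quad\Longleftrightarrow\quad d(v,u)<d(v,w)\,.
\]
The proof is a short contradiction argument. Suppose $v\in C_1$ but $d(v,u)>d(v,w)$; since $u,w$ are adjacent this forces $d(v,u)=d(v,w)+1$, so a shortest $v$--$w$ path extended by the edge $\{w,u\}$ is a shortest $v$--$u$ path, giving $w\in I(v,u)$. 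Convexity of $C_1$ then yields $w\in C_1$, contradicting $w\in C_2$. The symmetric argument using convexity of $C_2$ shows $v\in C_2\Rightarrow d(v,w)<d(v,u)$. Bipartiteness makes the two cases exhaustive and mutually exclusive: since $u$ and $w$ lie in different parts, $d(v,u)$ and $d(v,w)$ have opposite parities, so they can never be equal and differ by exactly one.

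Given the lemma, I would describe a three-phase algorithm. In Phase~1, select an arbitrary node $v_1$, predict an arbitrary label, and observe $y(v_1)$; this costs at most one mistake and, after possibly renaming the two classes, we may assume $v_1\in C_1$. In Phase~2, run a breadth-first search from $v_1$ and, in BFS discovery order, predict membership in $C_1$ (i.e.\ the label of $v_1$) for each visited node. As long as these predictions are correct we pay nothing; the first mistake occurs at some node $w$. Because $w$ is the \emph{first} error, every previously visited node is confirmed to lie in $C_1$, and in particular the BFS parent $u$ of $w$ lies in $C_1$ and is adjacent to $w$. Hence $\{u,w\}$ is a cut-edge with $u\in C_1$, $w\in C_2$, found in linear time (if the BFS finishes with no mistake, then $C_2=\emptyset$ and we are already done). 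In Phase~3, compute $d(\cdot,u)$ and $d(\cdot,w)$ by two further BFS runs and predict the label of each remaining node according to the lemma; by the lemma all of these predictions are correct.

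Summing up, Phase~1 incurs at most one mistake, Phase~2 incurs at most one mistake (the single error that exposes the cut-edge), and Phase~3 incurs none, for a total of at most $2$; all steps are a constant number of BFS passes, giving $\scO(|V|+|E|)$ time. The main obstacle---and the only nontrivial part---is establishing the structural lemma, since it is exactly here that both hypotheses are needed: convexity of \emph{both} clusters drives the two contradiction arguments, and bipartiteness is precisely what rules out the tie $d(v,u)=d(v,w)$ and thus guarantees that a single cut-edge cleanly bipartitions all of $V$. Once the lemma is in place, the reduction of ``learning the whole labeling'' to ``finding one cut-edge,'' together with the observation that BFS exposes a cut-edge at the first error, is routine; I would only need to double-check the degenerate case $C_2=\emptyset$ and the harmless relabeling in Phase~1.
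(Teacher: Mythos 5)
Your proof is correct and takes essentially the same route as the paper's (very terse) proof: locate a cut-edge $\{u,w\}$ with at most two mistakes, then classify every remaining node by comparing $d(\cdot,u)$ and $d(\cdot,w)$. You simply supply the details the paper leaves implicit---the BFS procedure that exposes the cut-edge at the first error, and the convexity-plus-parity lemma showing the distance comparison is never tied and always correct.
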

\noindent For the special case of grid graphs (Cartesian product of two paths), we have the following tight result even for $k$ convex classes.
\begin{proposition}\label{prop:grid}
	Let $G$ be any grid graph and $\scH$ be the set of all its convex $k$-partitions, with $k\ge 2$. Thus, we have $\Msd(\scH)\ge k/4$ and $\Msd(\textsc{GridWalker},\scH)\leq 3k$ while $\textsc{GridWalker}$ runs in time linear in $n$.
\end{proposition}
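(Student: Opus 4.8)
The plan is to first pin down the structure of convex sets in a grid and then prove the two bounds against that structure.

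\textbf{Step 1 (structure).} In the Cartesian product of two paths, shortest paths are monotone staircases, so the geodesic interval $I(u,v)$ of $u=(i_1,j_1)$ and $v=(i_2,j_2)$ is exactly the axis-aligned bounding box $[\min(i_1,i_2),\max(i_1,i_2)]\times[\min(j_1,j_2),\max(j_1,j_2)]$. Hence a set is convex iff it is closed under bounding boxes, and projecting a convex set onto the two axes shows its projections are intervals $[a,b]$ and $[c,d]$ and that it equals $[a,b]\times[c,d]$; that is, the convex sets are precisely the axis-aligned rectangles. Consequently a convex $k$-partition is a tiling of the grid by $k$ rectangles, and since every color class is convex, distinct rectangles must carry distinct colors.

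\textbf{Step 2 (upper bound via \textsc{GridWalker}).} The idea is to discover the rectangles one at a time. Repeatedly take the lexicographically smallest still-unlabeled cell $v=(r,c)$; because every lexicographically smaller cell is already labeled, the top and left neighbors of $v$ lie in other (already known) rectangles, so $v$ is the top-left corner of its rectangle $R$. I would then predict $v$ arbitrarily (at most one mistake) to learn $R$'s color $i$, walk rightward along row $r$ and downward along column $c$ predicting $i$, each walk ending in at most one mistake at the first cell outside $R$; since $R$ is a rectangle, these two walks reveal its right and bottom extents and hence all of $R=[r,b]\times[c,c_2]$, whose interior can then be predicted as $i$ with no further error. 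Charging the at most three mistakes -- color, right edge, bottom edge -- to $R$ and summing over the $k$ rectangles yields $\Msd(\textsc{GridWalker},\scH)\le 3k$. The linear running time follows because each cell is examined a constant number of times if the search for the next corner is implemented as a single amortized row-major sweep and each rectangle is filled in time proportional to its area.

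\textbf{Step 3 (lower bound).} Fix any tiling of the grid into $k$ rectangles (always possible when $k\le n$, the only interesting regime, by repeatedly splitting one rectangle by a straight cut) and restrict to the sub-family of $\scH$ consisting of all bijective colorings of these rectangles; each lies in $\scH$ since every color class is a single rectangle, hence convex. Against any learner the adversary answers adaptively: when the learner first queries a cell of a not-yet-colored rectangle, at least two colors remain consistent with a bijection as long as fewer than $k-1$ rectangles have been fixed, so the adversary can return a consistent color different from the learner's prediction. This forces a mistake on each of the first $k-1$ distinctly touched rectangles, giving $\Msd(\scH)\ge k-1\ge k/4$ for $k\ge 2$.

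\textbf{Main obstacle.} The delicate point is the bookkeeping in Step 2: a boundary-detecting walk makes its terminal mistake on a cell belonging to a \emph{neighboring} rectangle, so rectangles can become partially labeled and the lexicographically smallest unlabeled cell need not literally be a fresh corner. I expect to discharge this by the observation that a rightward walk's terminal mistake lands exactly on the top-left corner of the rectangle immediately to the right (that rectangle cannot have been processed earlier, else the cell would already be labeled), together with the rule of predicting any partially-labeled rectangle's remaining cells from an already-known same-rectangle neighbor, so that no cell outside the color/right/bottom triple of its own rectangle is ever charged a mistake. Making this charging argument airtight -- i.e., showing the three-mistakes-per-rectangle bound survives all such straying -- is the principal technical hurdle; the convex-equals-rectangle characterization and the lower bound are comparatively routine.
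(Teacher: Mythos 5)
Your Step~1 (convex sets in a grid are exactly the axis-aligned boxes, so a convex $k$-partition is a tiling by $k$ rectangles with distinct colors) is correct, and your Step~3 lower bound is also correct --- in fact your adaptive adversary gives $k-1$ mistakes, which is numerically stronger than the paper's bound of $k-H_k$ obtained from a uniformly random bijective coloring of a fixed tiling (the paper's randomized version has the advantage of holding in expectation against randomized learners as well). The genuine gap is in Step~2, and it is precisely the one you flag as the ``principal technical hurdle'': the claim you propose to close it with is false. A rightward walk leaving $R$ along $R$'s top row $r$ does land on the \emph{left edge} of the neighboring rectangle $R'$, but not necessarily on its top-left corner, because $R'$ may extend above row $r$; for instance, if $R'$ is a width-one column whose topmost cell (in a row above $r$) was labeled by an earlier stray downward walk, then $R'$ is partially labeled but unprocessed, and your walk's terminal mistake lands strictly below its corner. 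More damagingly, when a stray mistake labels exactly the top-left corner $(r_1,c_1)$ of a rectangle $R$, your algorithm later processes $R$ from $(r_1,c_1+1)$, fills only $[r_1,r_2]\times[c_1+1,c_2]$, and leaves $R$'s left column below the corner unlabeled; processing that column later triggers a second downward walk ending in a second bottom-edge mistake, so $R$ can be charged four mistakes unless the algorithm additionally (i) predicts each freshly selected cell from an already-labeled neighbor and (ii) exploits that equal colors imply equal rectangles to infer already-discovered extents and suppress redundant walks. None of this bookkeeping is specified or proved in your write-up, so the $3k$ bound does not follow from what you wrote.

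The paper sidesteps this entirely by never using a global lexicographic scan. Its discovery order is adjacency-driven: every terminal mistake reveals (with its true label) a border node of an adjacent, still-unprocessed cluster; these nodes are kept in a queue, and each cluster is dequeued and processed \emph{exactly once}, starting from such a border node, by at most three walks that locate two opposite corners of the rectangle, after which the entire cluster is filled in one shot. Since a cluster's color is already known when it is dequeued, and since a cluster is never partially filled or revisited, the accounting ``at most three cut-edge traversals per cluster'' is immediate, and walks that reach already-labeled nodes or the grid boundary cost nothing. To close your Step~2 you would either need to switch to this process-once, queue-based discovery, or formalize rules (i)--(ii) above and prove they cap every rectangle's charge at three; as it stands, the upper bound is not established.
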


\section{Learning homophilic labelings}\label{sec:homophilic}
A common alternative assumption to the convexity of the clusters $\scC_y$ is to assume that they are homophilic, see e.g., \citet{herbster2005online,cesa2013random, dasarathy2015s2}.
 In this section we focus on such graphs, quantifying homophily by assuming that 
the size of the cut-border $|\partial\scC_y|$ is small. %
In this setting, the following simple graph traversing strategy gives a bound of $|\partial\scC_y|+1$.
\begin{restatable}{prop}{homophilicUpper}\label{prop:homophilic_upper}
	Let $G=(V,E)$ be a graph with $n=|V|$ and $m=|E|$.
	Then, there exists an algorithm $\textsc{Traverse}$ that learns in total linear time $\scO(n+m)$ any (not necessarily convex) labeling $y\in [k]^V$ with at most $|\partial\scC_y|+1$ mistakes.
\end{restatable}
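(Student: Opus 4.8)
The plan is to let $\textsc{Traverse}$ be an ordinary breadth-first (or depth-first) search that exploits the freedom of the self-directed learner to choose which node to query next. First I would pick an arbitrary start node $v_1$, predict an arbitrary label for it, and observe $y(v_1)$; this single query accounts for the additive $+1$ in the bound. From then on I maintain the invariant that the set of already-queried nodes is connected, and at each step I select a still-unvisited node $v$ that is adjacent to some already-visited node $u$. The key idea is to predict $\haty(v)=y(u)$, i.e., to copy the observed label of the visited neighbor from which we expand to $v$.

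The mistake analysis then rests on a simple injection into the cut-border. For any queried node $v\neq v_1$ reached from a visited neighbor $u$, the prediction $\haty(v)=y(u)$ is wrong if and only if $y(v)\neq y(u)$, i.e., if and only if the edge $\{u,v\}$ is a cut-edge; in that case $v$ is incident to a cut-edge and hence lies in $\partial\scC_y$. Since every node is queried at most once, the nodes on which we err after the first step are pairwise distinct cut-nodes, so there are at most $|\partial\scC_y|$ of them. Adding the single possible mistake on $v_1$ yields $\Msd(\textsc{Traverse},y)\le |\partial\scC_y|+1$, as claimed.

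For the running time I would argue that the procedure is exactly a standard graph traversal: each node enters the frontier once and each edge is examined a constant number of times while looking for an unvisited neighbor of the visited set, so maintaining the frontier and choosing the next node costs $\scO(n+m)$ in total, and each per-step prediction (copying a neighbor's observed label) is $\scO(1)$.

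I expect the only genuinely delicate point to be verifying that the selection step is always well-defined, namely that whenever unvisited nodes remain there is at least one adjacent to the visited set. This is precisely where connectivity of $G$ enters (a standing assumption from \Cref{sec:prelim}): without it one would have to restart the traversal in each component and pay one extra arbitrary prediction per component. Everything else is routine bookkeeping—the mistake bound is the counting argument via the injection into $\partial\scC_y$, and the runtime is the textbook cost of BFS/DFS.
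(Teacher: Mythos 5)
Your proposal is correct and follows essentially the same approach as the paper: a single BFS/DFS traversal that predicts an arbitrary label for the start node and copies the observed label of the visited neighbor for every subsequently selected node, with mistakes charged injectively to distinct cut-nodes in $\partial\scC_y$. Your write-up is in fact slightly more explicit than the paper's (which leaves the injection into the cut-border implicit), but there is no substantive difference in the argument.
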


To achieve a related lower bound we adapt a proof by \citet{cesa2009learning,cesa2011predicting}, which holds for a different variant of the online learning setting. The lower bound is in terms of the \emph{merging degree}, another complexity measure of the cut-border. Here, we use a different definition of \emph{clusters}. Let a cluster be any 
{\em maximal} connected subgraph of $G$ that is {\em uniformly} labeled. Note that with this definition we can have up to $n$ clusters even when $k=2$. Given any cluster $C$, we denote by $\partial C$ its cut-border, by $\underline{\partial C}=\partial C\cap C$ its {\em inner border}, and by $\overline{\partial C}=\partial C\setminus\underline{\partial C}$ its {\em outer border} of $C$. Finally, the merging degree $\delta(C)$ of $C$ is then defined as $\delta(C)=\min(|\underline{\partial C}|,|\overline{\partial C}|)$. The merging degree of the whole graph $G$, is defined as $\delta(G)=\sum_{C\in\mathcal{P}_y}\delta(C)$, where $\mathcal{P}_y$ is the partition into the clusters induced by $y$.

\begin{restatable}{prop}{homophilicLower}\label{prop:homophilic_lower}
	Given any graph $G$ and any integer $c<n$, there exists a labeling $y$ satisfying $|\delta(G)|\leq 2c$
	such that any algorithm makes at least $c$ mistakes.
\end{restatable}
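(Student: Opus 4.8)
The plan is to prove the bound with an adaptive adversary that commits to a labeling only as the self-directed learner queries nodes, while guaranteeing the final labeling has merging degree at most $2c$. In the self-directed model the learner chooses the query order, so forcing many mistakes requires a set of $c$ nodes whose labels stay information-theoretically undetermined no matter when they are queried. Concretely I would exhibit a set $S\subseteq V$ with $|S|=c$ together with the family of $2^c$ labelings indexed by subsets $T\subseteq S$, where $\chi_T$ assigns class $2$ to $T$, class $1$ to all of $V\setminus S$ (using two of the $k$ classes), and has $\delta(G)\le 2c$.

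First I would construct $S$. Because $c<n$, a connected graph $G$ has a connected induced subgraph on $n-c$ vertices: take any spanning tree of $G$ and delete a leaf of the current tree $c$ times; the remaining tree is connected, so $G[V\setminus S]$ is connected, where $S$ is the set of the $c$ deleted vertices. Note that $S$ need not be independent; I only use that its complement stays connected.

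The technical heart is verifying $\delta(G)\le 2c$ for every $\chi_T$ in the family, which I would do by splitting $\delta(G)=\sum_C\delta(C)$ into the class-$2$ and class-$1$ clusters. For the class-$2$ side, each cut-node lies in the inner border of its own cluster, so $\sum_{C\subseteq T}\delta(C)\le\sum_{C\subseteq T}|\underline{\partial C}|\le\sum_{C\subseteq T}|C|=|T|\le c$. For the class-$1$ side, since $V\setminus S$ is connected and entirely class $1$ it lies in a single cluster $B$, whose merging degree I bound through the outer border, $\delta(B)\le|\overline{\partial B}|\le|T|$, because only vertices of $T$ can border it. Every other class-$1$ cluster consists solely of class-$1$ vertices of $S\setminus T$, so the total size, hence the total inner border, hence the total merging degree of these stray fragments is at most $|S\setminus T|$. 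Summing gives a class-$1$ contribution of at most $|T|+|S\setminus T|=c$, so $\delta(G)\le|T|+c\le 2c$ for every $T\subseteq S$. This is exactly where the $\min$ in the definition of $\delta$ is essential: $B$ may have a large inner border but only a small outer border, while the fragments have small inner borders.

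Finally I would run the adversary against an arbitrary algorithm: whenever it selects a vertex of $V\setminus S$, answer class $1$; whenever it selects $s\in S$, answer the opposite of its prediction. All answers are consistent with the single labeling $\chi_{T^\ast}$, where $T^\ast$ collects the vertices of $S$ that received class $2$, and $\chi_{T^\ast}$ lies in the family, so $\delta(G)\le 2c$. Since the learner errs on each of the $c$ vertices of $S$, it makes at least $c$ mistakes, proving the claim. The only real obstacle is the class-$1$ bound above; keeping the complement of $S$ connected makes it follow cleanly, whereas the tempting alternatives that instead force $S$ to be independent, or maintain a connected majority region throughout the interaction, break down on sparse graphs such as long paths, which is why the charging argument through the $\min$ is the right tool.
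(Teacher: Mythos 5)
Your proof is correct and follows essentially the same route as the paper's: both select a connected induced subgraph on $n-c$ nodes (your spanning-tree leaf-deletion makes its existence explicit), force one mistake on each of the $c$ complementary nodes via an adaptive adversary, and bound $\delta(G)\leq 2c$ by charging the large cluster containing the connected part to its \emph{outer} border (contained in the adversarially labeled set) and all remaining clusters to their \emph{inner} borders (contained in the $c$-node set). The only difference is presentational: the paper phrases this as an adaptation of Theorem 2 of Cesa-Bianchi et al., while you give the same accounting self-contained.
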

Thus, for $k=2$ and two connected clusters $C_1$, $C_2$, the algorithm \textsc{Traverse} achieves a near-optimal mistake bound, as long as the cut-border is balanced, that is, the inner borders of $C_1$ and $C_2$ have roughly the same size.

\section{Discussion}\label{sec:discussion}

In this section, we compare the self-directed learning setting with other learning settings, discuss whether real-world graphs fit our assumptions, and state interesting directions.

\paragraph{Comparison with other learning models.}
Let us compare our bounds to previously known results in active and online learning. In \emph{active learning} the goal is to learn with a small number of queries instead of the number of mistakes; essentially we skip step~\ref{step:prediction} of our setup. It follows that the self-directed learning mistake complexity is always smaller than the number of queries. In general, there can be arbitrarily large gaps between the number of self-directed mistakes the number of queries. 
For example, already on tree graphs the number of queries is linear in the number of leaves, while the number of self-directed mistakes is at most $2$.

Related issues arise in \emph{online learning}, where an adversary chooses the nodes (in step~\ref{step:selection} of our setup) to be labeled instead of the learner. On one hand by \Cref{prop:mistake_vs_vc} we know that the online and self-directed mistake complexities are at most a $\scO\left((\ln n)^2\right)$ factor apart. On the other hand, the best known efficient algorithm for halfspaces \citep{thiessen2022online} has an online mistake bound that depends on the largest $\ell$ such that the complete bipartite graph $K_{2,\ell}$ is a minor of $G$, a quantity typically much larger than the Hadwiger number. This term can be linear in $n$ for planar and bounded treewidth graphs, while we achieve a logarithmic number of self-directed mistakes.

\paragraph{Hadwiger number and sparsity of real-world graphs.} 
Many notions to quantify the sparsity of graphs, besides Hadwiger number, exist~\citep{nevsetvril2012sparsity, demaine2019structural}. %
One of the most studied assumptions is that the treewidth $\tw(G)$ of the graph is constant. For such graphs, our polynomial time algorithm \GoodF\  achieves a logarithmic number of mistakes in $n=|V|$ as $h(G)\leq \tw(G)+1$. Indeed, many real-world graphs have small to moderate treewidth. For example, communication networks~\citep{de2011treewidth}, infrastructure based networks~\citep{maniu2019experimental}, and social contracts on blockchains~\citep{chatterjee2019treewidth} tend to have a small treewidth. %
Hyperbolic random graphs as discussed before have a small treewidth for certain parameter choices and are often used to model social networks~\citep{blasius2016hyperbolic}. Also in biology many networks with small treewidth arise, such as proteins~\citep{peng2015itreepack} and protein-protein-interaction networks \citep{blanchette2012clique}. RNA structures are known to typically have a treewidth below 6~\citep{song2005efficient}. Molecules tend to have treewidth 2 or 3. For example, the molecules in the datasets NCI and PubChem (of size 250k and 135k molecules) have treewidth at most 3~\citep{horvath2010efficient, bocker2011computing}. Also $92$-$98\%$ of molecules from common small and large-scale molecular benchmark datasets have treewidth at most 2~\citep{bause2025maximally}. 

\paragraph{Convexity of clusters in real-world graphs.}
While the most common bias for communities in graphs is homophily, there is recent interest to develop graph learning approaches explicitly for non-homophilic data~\citep{lim2021large}. One potentially appropriate, alternative bias is our considered assumption of convex or near-convex clusters. For example, \citet{thiessen2021active} showed that the majority of communities in large-scale real-world networks (like DBLP, YouTube, and Amazon products) are indeed convex. Furthermore, \citet{marc2018convexity} and \citet{vsubelj2019convexity} showed that infrastructure and collaboration networks have a ``tree of cliques''-like structure leading to connected subgraphs being convex. Also in biology convex clusters arise, e.g., in gene-similarity~\citep{zhou2002transitive} and protein interaction networks~\citep{li2012identification}. 

\paragraph{Open problems.}
This is the first paper on self-directed learning on graphs, so naturally several questions remains unsolved. First of all, it is unclear if our lower bound for learning convex bipartitions is tight. In particular, we do not know if it is possible to design algorithms whose mistake bound depends polynomially on $\omega(G)$ instead of $h(G)$ like in \GoodF. We think that the optimal mistake bound for convex bipartitions might be $\omega(G)+1$. For example, the claim holds for weakly median graphs \citep{thiessen2021active}. However, even if an algorithm exists that make at most these numbers of mistakes, we do not believe that such algorithms can be implemented in polynomial time (e.g., due to the hardness of enumerating the version space \citep{seiffarth2023maximal}). Similar questions hold for the multiclass case, in addition to establishing if a better analysis of multiclass \GoodF, or a better algorithm, is possible. Finally, it would be valuable to identify a broad and significant class of input graphs for which our algorithm achieves optimality with respect to the merging degree discussed in Section~\ref{sec:homophilic}.

\bibliography{references}

\newpage
\appendix

\section{Graphs with no big clique minor}

Our main algorithm \GoodF\ is designed to operate on graphs which do not contain complete graphs $K_w$ as a minor, where $w$ is a parameter. %
We describe how the algorithm efficiently restores all labels in convex $2$-labeling within these graphs with a maximum of 
$3w^4\ln n$ mistakes, all while operating within polynomial time in $n$. Then, we show a mistake bound of $4M^*+ 3w^4\ln n$, if the labeling can be made convex after flipping the labels of at most 
$M^*$ nodes.

Before introducing the algorithm, let us review our notation and discuss some properties of graphs without complete graph $K_w$ as a minor. A graph $H$ is a minor of another graph $G$ if $H$ can be derived from $G$ by a sequence of edge contractions, edge deletions, and removal of isolated nodes, see an example in Figure~\ref{graph_minor}. In this context, contracting an edge means merging the two nodes connected by the edge and removing the edge itself. 
Now, suppose $G=(V, E)$ does not contain any complete graph $K_w$ as a minor. Denote by $U\subseteq V$ the \textit{unknown set}, consisting of all nodes whose labels we do not know.
In a graph $G$, we define a \textit{quadruple} as a set $\{(a,b),(c,d)\}$ containing two unordered pairs of nodes, $(a, b)$ and $(c, d)$, where $a$, $b$, $c$, and $d$ are four distinct nodes from the unknown set $U$. We emphasize that for the sake of notational simplicity we use $(a,b)$ for unordered pairs in the context of quadruples, i.e., $(a,b)=(b,a)$. We define a quadruple as a \textit{good quadruple} if the shortest paths connecting $a$ to $b$ and $c$ to $d$ intersect, that is, they share at least one common node. If there are multiple shortest paths between $a$ and $b$ or between $c$ and $d$, then at least one shortest path from each pair should intersect. Let $Q(U)$ denote the set of all quadruples in $U$ and $q(U)$ its size. Similarly, let $Q_{\mathrm{good}}(U)$ and $q_{\mathrm{good}}(U)$ denote the set of all good quadruples and their size, respectively.

We now list a number of propositions and observations that will simplify our proof.
\propA*
\begin{proof}
	If this were not the case we could consequently contract all edges on the shortest paths between each pair of given $w$ nodes and receive a clique $K_w$ which contradicts the assumption that $G$ is $K_w$-minor-free.
\end{proof}

\obsA*
\begin{proof}
	The total number of all quadruples is exactly \[\frac{1}{2}\binom{|U|}{2}\binom{|U|-2}{2}=3\binom{|U|}{4}=\frac{|U| (|U|-1) (|U|-2) (|U|-3)}{8}
	\,.\]
	For $w>4$, consider the set of all possible combinations of $w$ distinct nodes from $U$. By Proposition~\ref{proposition_1}, each set of $w>4$ nodes contains a good quadruple. On the other hand, each good quadruple can be 
	extended in $\binom{|U|-4}{w-4}$ ways up to the set of $w$ distinct nodes. Hence, the total number of good quadruples,  is at least 
	$
	\frac{\binom{|U|}{w}}{\binom{|U|-4}{w-4}}=\frac{|U|(|U|-1)(|U|-2)(|U|-3)}{w(w-1)(w-2)(w-3)}>\frac{|U|^4}{w^4}
	$
	and the ratio of good quadruples to the total number of all quadruples is at least $\frac{8}{w^4}$. 
\end{proof}
\obsB*
\begin{proof}
	Given that each quadruple contains four nodes and there are  $q_\mathrm{good}(U)$ good quadruples, it follows that some node $a'$ participates in at least $\left\lceil\frac{4\qgood(U)}{|U|}\right\rceil$ good quadruples. Let $a$ denote the node that participates in the most number of good quadruples. We claim that $a$ is a $U$-good node. We prove it by contradiction. If $a$ is not $U$-good, then the size of $U_a$ can be written as $\lfloor 4\eps (|U|-1)-t\rfloor$, where $t\geq 0$. Also, for the remaining $|U| - \lfloor(4\eps  (|U|-1)-t)\rfloor-1$ nodes the number of pairs $c,d\in U$ such that the quadruple $\{(a,b),(c,d)\}$ is considered good is strictly less than $\left\lceil 4\eps\binom{|U|-2}{2}\right\rceil$. Hence, the number of good quadruples containing $a$ cannot exceed $\lfloor\left(4\eps(|U|-1)-t\right)\rfloor \binom{|U|-2}{2}+ (|U| - \lfloor(4\eps  (|U|-1)-t)\rfloor-1)\left\lfloor4\eps \binom{|U|-2}{2}\right\rfloor$. By using Observation ~\ref{obs_1}, we see  that this number is strictly less than $\left\lceil\frac{4\qgood(U)}{|U|}\right\rceil$. However, this leads to a contradiction since $a$ is required to participate in at least the same amount of good quadruples as $a'$. Therefore, a good node $a$ exists.
\end{proof}

\obsC*
\begin{proof}
	We prove it by contradiction. If there are not many $c$ for which there are many $d$, the number of good quadruples containing given good $a$ and fixed $b$ from $b\in U_a$ cannot exceed 
    \[\lfloor\left(\eps(|U|-2)-t\right)\rfloor(|U|-3)+(|U|-
	\lfloor\left(\eps(|U|-2)-t\right)\rfloor-2)\lfloor\eps (|U|-3)\rfloor\,\] where $t\geq 0$. Note that this expression is strictly less than $4\eps\binom{|U|-2}{2}$. 
	This leads to a contradiction, as the number of good quadruples containing the given good node $a$ and the fixed $b\in U_a$ is at least $\left\lceil4\eps\binom{|U|-2}{2}\right\rceil$ by the definition of $U$-good nodes (Definition~\ref{def:good_node}). 
\end{proof}

\section{Binary classification}

Here we describe our main algorithm, \GoodF.

\subsection{\GoodF\ algorithm description}

\GoodF\ iteratively reduces the number of nodes with unknown labels by predicting the labels of nodes in good quadruples. 

Initially, the algorithm takes as input a simple connected graph $G = (V, E)$. It initializes the set of nodes with unknown labels $U$ as $V$. The function $Q_\mathrm{good}(U)$ is defined to output the set of all good quadruples in $U$. The main loop of the algorithm continues as long as $|U| > 0$. At each iteration of this main loop, if there are no good quadruples in the current set $U$, i.e., $|Q_\mathrm{good}(U)| = 0$, the algorithm predicts labels for all remaining nodes in $U$ arbitrarily and terminates. Note that the case $|Q_\mathrm{good}(U)| = 0$ corresponds to the situation when induced subgraph $G[U]$ of the graph $G$ is exactly a complete graph $K_{|U|}.$
Each iteration of this loop involves the following four steps.

\textbf{Step 1: find a good node $a$.}
For each node $a'\in U$ the algorithm computes the number of good quadruples $\{(a',b),(c,d)\}$ in $U$ containing $a'$ and then finds the node $a$ that participates in the maximum number of these good quadruples. The algorithm arbitrarily predicts the label $\haty(a)$ for $a$, ($\haty(a)$ can be either $0$ or $1$), observes the true label $y(a)$, stores $y(a)$ in $\widetilde{y}$, and removes $a$ from $U$. 

\textbf{Step 2: find a corresponding node $b$.}
The algorithm %
processes each node $b'$ in the remaining set $U$, computes the number of good quadruples $\{(a,b'),(c,d)\}$ in $U$ containing couple $(a,b')$ and then finds the node $b$ which together with $a$ participates in the maximum number of these good quadruples. The algorithm predicts the opposite label $\haty(b) = 1 - \widetilde{b}$ for $b$, observes the true label $y(b)$, and removes $b$ from $U$, see \Cref{figure:good_node_alg} for an example. The algorithm repeats this step while $\haty(b')=y(b')$ and $U$ is not empty. This process is referred to as \textit{selecting nodes $b'$ in decreasing order}. If $U$ becomes empty we exit from the algorithm, because we predicted all labels. Otherwise, we encountered a mistake and found a node $b$ with the same label as node $a$. In this case, we go to Step 3. 

\textbf{Step 3: find a corresponding node $c$.}
If the prediction for a node $b$ is incorrect, the algorithm processes each node $c'$ in the remaining set $U$, computes the number of good quadruples $\{(a,b),(c',d)\}$ in $U$ containing both couple $(a,b)$ and $c'$, and then finds the node $c$ which together with couple $(a,b')$  participates in the maximum number of these good quadruples. The algorithm repeats this step while $\haty(c')=y(c')$ and $U$ is not empty. This process is referred to as \textit{selecting nodes $c'$ in decreasing order}. If $U$ becomes empty we exit from the algorithm.

\textbf{Step 4: find a corresponding node $d$.}
If the prediction for a node $c$ is incorrect, the algorithm predicts the label $\widetilde{y}$ to each node $d'$, where $\{(a, b),(c, d')\}$ forms a good quadruple, learns the true label of $d'$ and removes $d'$ from $U$. This step continues until either an incorrect prediction for $d'$ occurs or the set of such $d'$ becomes empty.

\textbf{Step 5: predict remaining labels.}
If no more good quadruples exist while the graph is still not fully labeled, we predict arbitrary labels for the remaining nodes in $U$. This corresponds to the case when the induced subgraph $G[U]$ is exactly a complete graph $K_{|U|}$ or $|U|\leq 3$.

\subsection{Analysis of the algorithm}\label{section:analysis_and_proof}
The algorithm \GoodF\ operates on a connected graph $G = (V, E)$ with $n$ vertices. %
We will show in \Cref{thm:convex_case} that \GoodF\ makes $\scO(\ln n)$ mistakes for graphs with constant Hadwiger number $h(G)$. Note that the algorithm does not require knowledge of the Hadwiger number $h(G)$ beforehand.
We established Observation~\ref{obs_1}, which states that in any sufficiently large $U \subseteq V$, the number of good quadruples $\qgood(U)$ in $U$ is at least $\frac{8}{(h(G)+1)^4}$ of the total number of all quadruples $q(U)$ in $U$. Next, for $\eps = \frac{\qgood(U)}{8q(U)}$ (which is at least $\frac{1}{(h(G)+1)^4}$), we introduced the notion of a $U$-good node and proved Observation~\ref{obs_2}, which confirms the existence of such nodes in any sufficiently large $U \subseteq V$. We also showed that the node $a$ participating in the most number of good quadruples is $U$-good. Hence, selecting $a$ in \textbf{Step 1} of the algorithm ensures $a$ is a $U$-good node provided that $|U|$ is large enough. Assuming $|U| > \left\lceil\frac{1}{\eps}\right\rceil + 3$ guarantees that $|U| > h(G) + 1$ and that Observations~\ref{obs_1},\ref{obs_2}, and \ref{obs_3} hold. Note that if $|U|$ becomes smaller, the number of mistakes made by the algorithm is trivially at most $|U|$.

In \textbf{Step 2} we either discover at least $\eps|U|$ labels or, by making one mistake in Step 2, move to Step 3. Consider the second case. Since we selected the nodes $b'$ in decreasing order 
and $a$ is a $U$-good node, the node $b$ where we made a mistake as well as other first $\lceil4\eps(|U|-1)\rceil$ nodes selecting in accordance of this decreasing order, belongs to the set $U_a$, where $U_a$ is from Definition~\ref{def:good_node}. Therefore, we may use Observation~\ref{obs_3} to state that for this $y$, there exist at least $\lceil\eps(|U|-2)\rceil$ nodes $c'$ such that for at least $\lceil\eps(|U|-3)\rceil$ of the nodes $d'$, such as $\{(a, b), (c', d')\}$ forms a good quadruple. Note that we do not claim that the labels of all $c'$ and $d'$ nodes are unknown, because some of them might have already been observed during Step 2 when we checked nodes $b'$ in decreasing order.

In \textbf{Step 3}, we take $c'$ in decreasing order (line~\ref{decreasing_order_c}), ensuring we have many $d'$. If we do not make mistakes among the first $\lceil\eps(|U|-2)\rceil$ nodes $c'$, we discover at least $\eps|U|$ labels in total; otherwise, we find a node $c$ among the first $\lceil\eps(|U|-2)\rceil$ nodes which label we predicted with a mistake. 

In \textbf{Step 4}, we operate with this wrongly predicted node $c$. We know that there are many $d'$ such that $\{(a,b),(c,d')\}$ forms a good quadruple. So, we predict label $\widetilde{y}$ for these $d'$. It is impossible for a vertex $d'$ in Step 4 to be processed with a mistake. In such a scenario, both $a$ and $b$ would have the label $\widetilde{y}$, while $c$ and $v$ would have the opposite label, $1-\widetilde{y}$, which is impossible when the labeling is convex. Finally, we conclude that in total, at least $\eps|U|$ labels were predicted during these four steps. 

To conclude, in \textbf{Step 5}, 
given that there are no more good quadruples, we have that $|U|\leq h(G)$ and in the worst case we commit a mistake on all the remaining nodes.
Now we are ready to state and prove Theorem~\ref{thm:convex_case}.

\mainUpper*

\begin{proof} We prove the mistake bound and discuss the runtime separately.

	\emph{Mistake bound.} In each round $i$ of \GoodF, we fix a good node $a$ and learn at least an $\eps$-fraction of the labels from the current unknown set $U$, making no more than three mistakes. This process incrementally reduces the size $\sigma$ of the unknown set $U$ at each round, that we represent as the sequence
	\[
	\sigma_0 = |V|, \sigma_1, \dots, \sigma_q,
	\]
	where $q$ is the total number of rounds after the very first one, such that $\sigma_q\geq \left\lceil\frac{1}{\eps}\right\rceil+3$ holds. Note that $\sigma_{i} \leq \sigma_{i-1}(1-\eps)$ holds for each $i\in[q]$. Hence, 
	\[\sigma_q\leq (1-\eps)\sigma_{q-1}\leq\ldots\leq (1-\eps)^q \sigma_0=(1-\eps)^q n.\]
	Consequently, the number of rounds $q$ needed after the very first one  is bounded by $\lceil\log_{\frac{1}{1-\eps}}\frac{n}{\frac{1}{\eps}+3}\rceil$. The total number of mistakes made is thus upper bounded by $3\lceil\log_{\frac{1}{1-\eps}}\frac{n}{\frac{1}{\eps}+3}\rceil+\lceil\frac{1}{\eps}\rceil+3$. Note that by Observation~\ref{obs_1}, the ratio between good quadruples $q_{\mathrm{good}}(U)$ and the number of all quadruples $q(U)$ in any sufficiently large subset $U$ of $K_{h(G)+1}$-minor-free graphs is at least $\frac{8}{(h(G)+1)^4}$. Hence, we have
	$
	\eps=\frac{q_{\mathrm{good}}(U)}{8q(U)}\geq\frac{1}{(h(G)+1)^4}\,.
	$
	Also, note that 
	\[
	3\left\lceil\log_{\frac{1}{1-\eps}}\frac{n}{\frac{1}{\eps}+3}\right\rceil+\left\lceil\frac{1}{\eps}\right\rceil+3\leq 3\frac{\ln \frac{n}{\frac{1}{\eps}+3}}{\ln \frac{1}{1-\eps}}+\frac{1}{\eps}+7\,,
	\]
	and the right-side is decreasing in $\eps$ on the interval $(0,1)$. Hence, the number of mistakes does not exceed
	\[
	\frac{3\ln \left(\frac{n}{(h(G)+1)^4+3}\right)}{\ln \left(\frac{(h(G)+1)^4}{((h(G)+1)^4-1)}\right)}+(h(G)+1)^4+7\,.
	\]
	Finally, since 
    \begin{multline*}        
    \ln \frac{(h(G)+1)^4}{(h(G)+1)^4-1}=\ln \left(1+\frac{1}{(h(G)+1)^4-1}\right)\\
    >\frac{1}{(h(G)+1)^4-1}-\frac{1}{2((h(G)+1)^4-1)^2}>\frac{1}{(h(G)+1)^4}\,,
    \end{multline*}
 the number of mistakes is less than 
 \begin{multline*}
 3(h(G)+1)^4\ln \frac{n}{(h(G)+1)^4+3}+(h(G)+1)^4+7=\\
 3(h(G)+1)^4 (\ln n-\ln ((h(G)+1)^4+3))+(h(G)+1)^4+7\leq 3(h(G)+1)^4\ln n\,.
 \end{multline*}

 \emph{Runtime.} Note that we can compute the set of all good quadruples in $V$ in $ \scO(n^4d)$ time. This bound can be derived as follows: first, by using a BFS search, we compute a shortest path for each pair of vertices, which takes $ \scO(n^4)$ time. Then, for each of the pairs of shortest paths processed, we verify whether they intersect in $\scO(n)$ time. Once we found the set of all good quadruples in $V$ all other operations in the algorithm such as finding all good quadruples in $U$ for some $U$, finding all good quadruples in $U$ containing 
a given vertex $a$, pair $(a,b)$, pair $(a,b)$ and node $c$ as well as finding $a$, pair $(a,b)$, pair $(a,b)$ and node $c$ which maximize the number of good quadruples containing them, are linear in the size of the set of all good quadruples in $V$.
\end{proof}

\section{Learning near-convex binary labelings}\label{sec:learning-near-convex-full}

In real-world scenarios, it is unrealistic to expect that node labeling will be convex. Even if the requirements of a particular task dictate a convex labeling, deviations and errors from the ideal scenario are always possible. Therefore, let us consider a scenario where the node labeling is not convex, but is, in some sense, close to being convex. To measure the deviation from convexity, we introduce a very natural concept.
A labeling of the nodes is said to be \textit{$M^*$-near-convex} if can be converted into a convex labeling by flipping no more than $M^*$ nodes. Here ``flipping'' refers to changing a node's label from one state to another in binary labeling. Denote by $\mathcal{M}^*$ this subset of $M^*$ nodes. %

\nonconvexcase*
\begin{proof}
	Consider a quadruple $\{(a, b), (c, d)\}$ as \textit{violating convexity} if both $a$ and $b$ are labeled $0$, while $c$ and $d$ are labeled $1$, or vice versa. By the definition of $\mathcal{M}^*$, at least one node among $\{a,b,c,d\}$ should belong to $\mathcal{M}^*$. The algorithm exits from the main while-loop, when
	\begin{enumerate}
		\item it either predicts with at most three mistakes at least $\eps$-fraction of unknown nodes,
		\item or with no more than four mistakes, it finds in $U$ a violating convexity quadruple $\{(a,b),(c,v)\}$ and then removes it from $U$. In this case, the size of $\mathcal{M}^*$ is decreasing at least by one.
	\end{enumerate} 
	
	In each round $i$ of \GoodF, we fix a good node $a$ and either learn at least an $\eps$-fraction of the labels from the current unknown set $U$ or the size of $\mathcal{M}^*$ decreases at least by one. This process incrementally reduces the size $\sigma$ of the unknown set $U$ at each round, that we represent as the sequence
	\[
	\sigma_0 = |V|, \sigma_1, \dots, \sigma_q,
	\]
	where $q$ is the total number of rounds after the very first one, such that the condition $\sigma_q\geq \left\lceil\frac{1}{\eps}\right\rceil+3$ holds. 
	For each $i\in[q]$, we have that $\sigma_{i-1} \leq (1 - \eps)\sigma_{i}$ or $M^*$ decreases at least by one.
	Consequently, the mistake bound is less than  $4M^*+3h^4\ln n$, where $3(h(G)+1)^4\ln n$ is derived in the same way as in the proof of Theorem~\ref{thm:convex_case}.    
\end{proof}

\section{Multiclass classification}\label{sec:multi-class}

\begin{algorithm2e}
	\caption{Function FindDistinctLabel}
	\label{algo:finddistinctlabel}
	\DontPrintSemicolon
    \LinesNumbered
    \SetAlgoNoEnd
	\SetKwInOut{Input}{input}
	\Input{Graph $G=(V,E)$, set of nodes $S$, set of labels $Z$}
	\SetKwFunction{FindDistinctLabel}{FindDistinctLabel}
	\SetKwProg{Fn}{Function}{:}{}
	\Fn{\FindDistinctLabel{$G, S, Z$}}{
		\If{$|Z| = 1$}{
			$z \leftarrow$ the only element in $Z$\;
			\For{$a' \in S$}{%
				\If{$y(a')$ is unknown}{\textbf{predict} $\haty(a')=z$;\quad \textbf{ observe} $y(a')$ %
                }
				\If{$y(a') \neq z$ \tcp{\textcolor{blue}{applying this step to all nodes regardless of whether the label was predicted just now or was already known}}}{\KwRet{$a'$}} %
			}
			\KwRet{-1} \tcp{\textcolor{blue}{all nodes in $S$ predicted correctly}}
		}
		$\epsilon \leftarrow \frac{|Q_\mathrm{good}(S)|}{8|Q(S)|}$ %
        
		\For{$a' \in S$}{%
			$Q_\mathrm{good}^{a'}(S) \leftarrow \{(b, c, d) \mid \{(a', b), (c, d)\} \text{ is a good quadruple}\}$%
		}
		$a \leftarrow \argmax_{a' \in S} |Q_\mathrm{good}^a|$\label{node_a} \tcp{\textcolor{blue}{find node that is in most good quadruples}}
		\If{$y(a)$ is unknown}{\textbf{predict} arbitrary $\haty(a)$ from $Z$;\quad \textbf{ observe} $y(a)$}
		\lIf{$y(a) \notin Z$}{\KwRet{$a$}\label{line_node_a}}
		$Y_a \leftarrow \{b' \in S \mid \text{there are } \ge\left\lceil 2\epsilon(|S|-2)(|S|-3)\right\rceil \text{ pairs } (c, d) \text{ such that } \{(a, b'), (c, d)\} \text{ is a good quadruple}\}$\label{Y_a} \tcp{\textcolor{blue}{update $Y_a$, applying this step to node $a$ regardless of whether the label was predicted just now or was already known}}
		$b \leftarrow$ \FindDistinctLabel{$G, Y_a, Z \setminus \{y(a)\}$}\label{node_b} \tcp{\textcolor{blue}{recursive call}}
		\lIf{$b = -1$ \tcp{\textcolor{blue}{many nodes from $Y_a$ predicted correctly}}}{ \KwRet{-1} \label{line_if_b=-1}}
		\lIf{$y(b) \neq y(a)$ \tcp{\textcolor{blue}{thus found node with label not in $Z$}}}{ \KwRet{$b$}}
		$Y_{ab} \leftarrow \{c' \in S \mid \text{there are at least } \left\lceil\epsilon(|S|-3)\right\rceil \text{ nodes } d \text{ such that } \{(a, b), (c', d)\} \text{ is a good quadruple}\}$\label{Y_ab} %
		$c \leftarrow$ \FindDistinctLabel{$G, Y_{ab},\{y(a)\}$}\label{node_c} \\
		\lIf{$c = -1$ \tcp{\textcolor{blue}{all nodes in $Y_{ab}$ predicted correctly}}}{ \KwRet{-1}}\label{line_if_c=-1} 
		\lIf{$y(c) \notin Z$}{\KwRet{$c$}}\label{line_y(c)_notin_Z}
		$Y_{abc} \leftarrow \{d' \in S \mid \{(a, b), (c, d')\} \text{ is a good quadruple}\} $\label{Y_abc}\\
		$d \leftarrow$ \FindDistinctLabel{$G, Y_{abc}, Z \setminus \{y(c)\}$}\label{node_d}\\
		\lIf{$d = -1$ \tcp{\textcolor{blue}{a big fraction of nodes from $Y_{abc}$ predicted correctly}}}{\KwRet{-1} \label{line_if_d=-1}}
		\lIf{$y(d) = y(c)$  \tcp{\textcolor{blue}{labeling is not convex}}}{\KwRet{-1}\label{line_if_y(d)=y(c)}}
		\KwRet{$d$}\label{line_returned_d} \tcp{\textcolor{blue}{else $y(d)$ is not in $Z \setminus \{y(c)\}$ and $y(d) \neq y(c)$, so $y(d)$ is not in $Z$}}
	}
\end{algorithm2e}

\begin{algorithm2e}
		\caption{Multiclass-\GoodF}
	\label{algo:multiclass}
		\DontPrintSemicolon
    \LinesNumbered
    \SetAlgoNoEnd
	\SetKwFunction{FindDistinctColor}{FindDistinctLabel}
	\SetKwInOut{Input}{input}
	\Input{Graph $G=(V,E)$, set of labels $Z$}
	$U \leftarrow V$\; %
	\While{$|Q_\mathrm{good}(U)| > 0$}{ 
		\FindDistinctColor{$G, U, Z$}\; %
		remove from $U$ all nodes with known labels\; %
	}
	\textbf{predict} arbitrary labels for any remaining nodes in $U$
\end{algorithm2e}

\subsection{Algorithm description}\label{multi_class}

\textsc{FindDistinctLabel} algorithm is an auxiliary algorithm designed to identify nodes whose label differs from the labels of other nodes in a given set $S$. 
The algorithm inputs are graph $G$, set of nodes $S$, and set of labels (colors) $Z$. 
\textsc{FindDistinctLabel} starts by checking if $Z$ has only one element ($|Z| = 1$). In this scenario, it processes each node $a'$ in $S$. If the label of $a'$ is unknown, it assigns the single label $z$ to each node $a'$ in $S$, predicting $\hat y(a') = z$ and observing the actual label $y(a')$. Regardless of whether the label was previously known or just predicted, if $y(a')$ differs from $z$, \textsc{FindDistinctLabel}$(G,S,Z)$ exits and returns that node $a'$; otherwise, it continues to the next node. If all nodes match $z$, it returns $-1$.

If the set of labels $Z$ contains more than one label, the algorithm proceeds to the main part. It computes $\eps = \frac{|Q_\mathrm{good}(S)|}{8|Q(S)|}$. For each node $a' \in S$, it identifies good quadruples containing $a'$. Then, it selects the node $a$, which participates in the most number of good quadruples among all $a'\in S$. If the label of $a$ is unknown, it predicts an arbitrary label $\hat{y}(a)$ from $Z$ and observes the true label $y(a)$. Regardless of whether the label $y(a)$ was previously known or just predicted, if $y(a)\notin Z$, \textsc{FindDistinctLabel}$(G,S,Z)$ exits and returns that node $a$. 

Next, the set $Y_a$ is updated to include nodes $b'$ for which there are enough good quadruples $\{(a, b'), (c, d)\}$. Namely, there are at least 
\[\left\lceil 2\epsilon(|S|-2)(|S|-3)\right\rceil\] pairs $(c, d)$ such that $\{(a, b'), (c, d)\}$ is a good quadruple. The algorithm then recursively calls the \textsc{FindDistinctLabel} function with graph $G$, set $Y_a$, and $Z \setminus \{y(a)\}$, see Figure~\ref{figure:FindDistinctLabel_alg} for an example. The variable $b$ is assigned the result of the \textsc{FindDistinctLabel} function. This process continues recursively, going deeper and incrementing $Z$. As the recursion unwinds, the algorithm checks each result step-by-step. During the unwinding, the algorithm verifies the outcomes of the recursive calls: if it encounters $-1$, it returns $-1$; otherwise, it checks if a current node $a$ is found with a label $y(a) = y(b)$. Once such node $a$ is found, the algorithm starts working with the set $Y_{ab}$.

The set $Y_{ab}$ is updated to include nodes $c'\in S$  for which there are enough good quadruples $\{(a, b), (c', d)\}$. The algorithm recursively calls the \textsc{FindDistinctLabel} function with graph $G$, set $Y_{ab}$, and $\{y(a)\}$. The variable $c$ is assigned the result of the \textsc{FindDistinctLabel} function. If the recursive call returns $-1$, the algorithm returns $-1$. If the label of node $c$ found in the recursive call is not $y(a)$, the algorithm returns this node $c$. Once such node $c$ is found, the algorithm starts working with the set $Y_{abc}$.

The set $Y_{abc}$ is updated to include nodes $d'\in S$ for which there are enough good quadruples $\{(a, b), (c, d')\}$. Then, it recursively calls the \textsc{FindDistinctLabel} function with the graph $G$, the set $Y_{abc}$,  and $Z \setminus \{y(c)\}$. Note that $Z$ here is decreasing. The variable $d$ is assigned the result of the \textsc{FindDistinctLabel} function. If the recursive call returns $-1$, the algorithm returns $-1$. If the label of the node $d$ found in the recursive call is the same as $y(c)$, the algorithm returns $-1$. If the label of node $d$ differs from $y(c)$ and is not in $Z$, the algorithm returns node $d$.

\textsc{FindDistinctLabel}$(G,S,Z)$ might take the same nodes multiple times—for instance, a node might be selected as $b'$, later than $c'$, and then as $d'$. 
But the algorithm predicts only unknown labels, thanks to a conditional check for label status.

\textsc{Multiclass-\GoodF} predicts labels for nodes in a graph $G=(V,E)$ with a given set of labels $Z$. It starts by initializing set $U$ to contain all nodes in $V$, indicating that initially, all node labels are unknown. The algorithm then enters a while loop that continues as long as there are good quadruples in $U$. Within the loop, the function \textsc{FindDistinctLabel} is called with graph $G$, set $U$, and labels $Z$. This function tries to find a node in $U$ whose label is distinct from the labels in $Z$. After each call to \textsc{FindDistinctLabel}, the algorithm removes nodes with known labels from $U$. Once the loop terminates, indicating that there are no more good quadruples, the algorithm assigns arbitrary labels to any remaining nodes in $U$.

\begin{figure}
	\begin{tikzpicture}
		
		\node[circle, draw] (A) at (0,-2) {$v_1$};
		\node[circle, draw] (B) at (2,-2) {$v_2$};
		\node[circle, draw] (C) at (4,-2) {$v_3$};
		\node[circle, draw] (D) at (6,-2) {$v_4$};
		\node[circle, draw] (E) at (8,-2) {$v_5$};
		\node[circle, draw] (F) at (10,-2) {$v_6$};
		\node[circle, draw] (G) at (7,1) {$v_7$};
		\node[circle, draw] (H) at (9,1) {$v_8$};
		\node[circle, draw] (K) at (11,1) {$v_9$};
		\node[circle, draw] (L) at (2,1) {$v_{10}$};
		\node[circle, draw] (M) at (4,1) {$v_{11}$};
		\node[circle, draw] (N) at (2,-5) {$v_{12}$};
		\node[circle, draw] (O) at (4,-5) {$v_{13}$};
		\node[circle, draw] (P) at (6,-5) {$v_{14}$};
		\node[circle, draw] (R) at (8,-5) {$v_{15}$};
		
		\node[above=0.5cm] at (A) {$y(a_1)=1$};
		\node[above=0.5cm] at (B) {$y(b_1)=2$};
		\node[above=0.5cm] at (C) {$3$};
		\node[above=0.5cm] at (D) {$4$};
		\node[above=0.5cm] at (E) {$5$};
		\node[above=0.5cm] at (F) {$3$};
		\node[above=0.5cm] at (G) {$y(c_1)=3$};
		\node[above=0.5cm] at (H) {$y(c_2)=3$};
		\node[above=0.5cm] at (K) {$y(c_3)=2$};
		\node[above=0.5cm] at (L) {$y(c_1)=2$};
		\node[above=0.5cm] at (M) {$y(c_2)=10$};
		\node[above=0.5cm] at (N) {$y(d_1)=2$};
		\node[above=0.5cm] at (O) {$5$};
		\node[above=0.5cm] at (P) {$6$};
		\node[above=0.5cm] at (R) {$1$};

		\draw[->] (A) -- (B);
		\draw[->] (B) -- (C);
		\draw[->] (C) -- (D);
		\draw[->] (D) -- (E);
		\draw[->] (E) -- (F);
		\draw[->] (D) -- (G);
		\draw[->] (G) -- (H);
		\draw[->] (H) -- (K);
		\draw[->] (C) -- (L);
		\draw[->] (L) -- (M);
		\draw[->] (M) -- (N);
		\draw[->] (N) -- (O);
		\draw[->] (O) -- (P);
		\draw[->] (P) -- (R);
		
		\draw[->, bend right] (F) to (E);
		\draw[->, bend right] (E) to (D);
		\draw[->, bend right=10] (K) to (C);
		\draw[->, bend right] (R) to (P);
		\draw[->, bend right] (P) to (O);
		\draw[->, bend right] (O) to (N);

		\node at (10,-2.5) {$3 \notin Z$};
		\node at (11,0.5) {$2 \notin Z$};
		\node at (9.5,2.3) {$Z=\{3\}$};
		\node at (3,2.3) {$Z=\{2\}$};
		\node at (3.5,-3.5) {$Z \setminus \{1,10\}$};
		
	\end{tikzpicture}
	\caption{An example illustrating how \textsc{FindDistinctLabel} algorithm operates.}
	\label{figure:FindDistinctLabel_alg}
\end{figure}
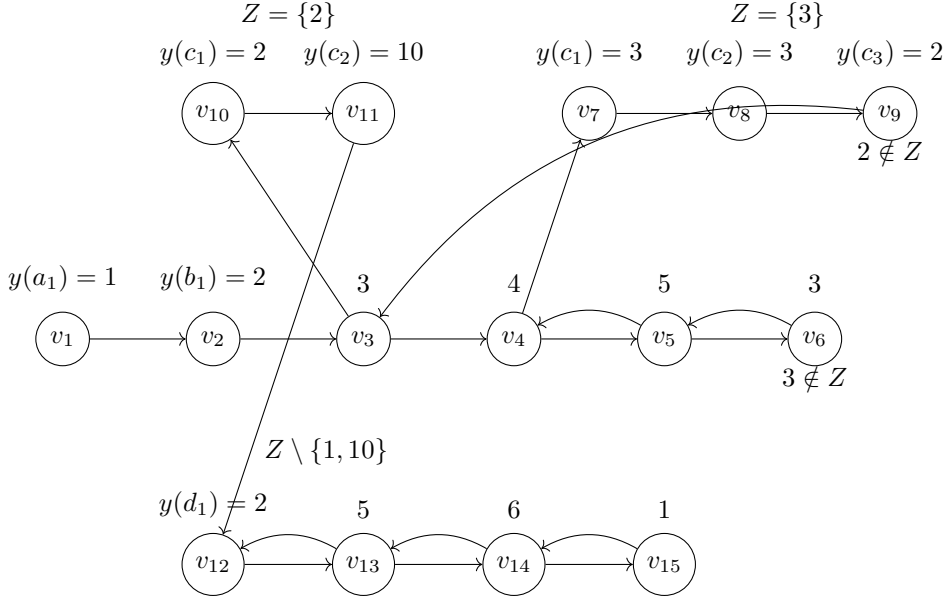

\subsubsection{Correctness.}\label{correctness}

\begin{proposition}\label{FindDistinctLabel_property}
	\textsc{FindDistinctLabel}$(G, S, Z)$ makes at most $3\cdot 2^{|Z|-1}-2$ mistakes and
	\begin{itemize}
		\item either finds and returns node in $S$, which label does not belong to $Z$,
		\item either reveals at least $\eps^{|Z| - 1}|S|$ labels in $S$ and returns $-1$,
		\item either finds in $S$ a quadruple that violates convexity, reveals all labels of these four vertices, and returns $-1$.
	\end{itemize}
\end{proposition}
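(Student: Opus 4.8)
The natural approach is \emph{induction on $|Z|$}, following the recursive structure of \textsc{FindDistinctLabel} (\Cref{algo:finddistinctlabel}). Throughout I would fix the uniform lower bound $\eps\ge 1/(h(G)+1)^4$ guaranteed by \Cref{obs_1}, so that the exponents $\eps^{|Z|-1}$ are comparable across recursion levels. For the base case $|Z|=1$, the routine predicts the unique label $z$ on each unknown node of $S$ and returns the first node whose true label differs from $z$. It therefore makes at most one mistake ($=3\cdot2^{0}-2$) and either returns a node with label $\notin Z$ (first outcome) or, all labels being $z$, reveals all $|S|=\eps^{0}|S|$ labels and returns $-1$ (second outcome); the third outcome cannot arise.

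For the inductive step $|Z|\ge 2$ I would track the four prediction sites. The node $a$ maximizing the number of incident good quadruples is $S$-good by \Cref{obs_2} whenever $|S|$ is large enough, and predicting it costs at most one mistake; if $y(a)\notin Z$ we return $a$. Otherwise the routine recurses on $(Y_a,Z\setminus\{y(a)\})$ to get $b$, then on $(Y_{ab},\{y(a)\})$ to get $c$, then on $(Y_{abc},Z\setminus\{y(c)\})$ to get $d$. For the mistake count, $a$ and $c$ cost at most one each (the $c$-call has a singleton label set, hence the base-case bound $1$), while the $b$- and $d$-calls cost at most $3\cdot2^{|Z|-2}-2$ each by the inductive hypothesis. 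This yields the recurrence $M(|Z|)\le 2M(|Z|-1)+2$ with $M(1)=1$, whose solution is exactly $M(|Z|)=3\cdot2^{|Z|-1}-2$.

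For the trichotomy I would inspect each return. A returned node $a$, $b$, $c$, or $d$ gives the first outcome: using the inductive guarantee that every recursive call returns a node with label outside its own label set, together with the explicit tests $y(b)\ne y(a)$, $y(c)\notin Z$, and $y(d)\ne y(c)$, one checks the returned node has label $\notin Z$. A $-1$ caused by $y(d)=y(c)$ gives the third outcome, since then $a,b$ carry $y(a)$ and $c,d$ carry $y(c)\ne y(a)$ while $d\in Y_{abc}$ makes $\{(a,b),(c,d)\}$ a good quadruple, contradicting convexity; all four labels are observed. The remaining $-1$ returns (from $b=-1$, $c=-1$, $d=-1$) give the second outcome. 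Here I would invoke the size guarantees of \Cref{def:good_node} and \Cref{obs_3}, namely $|Y_a|\ge\lceil4\eps(|S|-1)\rceil$, $|Y_{ab}|\ge\lceil\eps(|S|-2)\rceil$, and $|Y_{abc}|\ge\lceil\eps(|S|-3)\rceil$ (using that $b\in Y_a$ and $c\in Y_{ab}$), and combine them with the inductively revealed fraction in the subinstance; e.g.\ for $b=-1$ the sub-call reveals at least $\eps^{|Z|-2}|Y_a|\ge \eps^{|Z|-2}\cdot4\eps(|S|-1)\ge\eps^{|Z|-1}|S|$.

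The technical heart is this last bound: making the compounding of the $\eps$ factors rigorous. Two points need care. First, one must ensure \Cref{obs_1,obs_2,obs_3} remain applicable as the sets shrink along the recursion, i.e.\ each of $S,Y_a,Y_{ab},Y_{abc}$ stays large relative to $\eps$ (when a set becomes too small the number of mistakes is trivially at most its size, as in \Cref{thm:convex_case}). Second, the additive ``$-2$'' and ``$-3$'' slack from the set-size bounds must be absorbed; this is cleanest by also counting the already-observed nodes $a,b,c$, which lie outside $Y_a,Y_{ab},Y_{abc}$ because a quadruple requires four distinct nodes — so, e.g., in the $c=-1$ case the revealed labels number at least $|Y_{ab}|+2\ge\eps(|S|-2)+2\ge\eps|S|\ge\eps^{|Z|-1}|S|$. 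Once this bookkeeping is settled, the three outcomes and the mistake bound follow simultaneously from the induction.
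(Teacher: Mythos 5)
Your proposal is correct and takes essentially the same route as the paper's proof: induction on $|Z|$, a case analysis over the algorithm's termination points, the mistake recurrence $M(|Z|)\le 2M(|Z|-1)+2$ with $M(1)=1$ (the paper counts $1 + (3\cdot 2^{k-1}-2) + 1 + (3\cdot 2^{k-1}-2) = 3\cdot 2^{k}-2$ along the worst path), and the same size bounds on $Y_a$, $Y_{ab}$, $Y_{abc}$ combined with the inductively revealed fraction to certify the second outcome, with the $y(d)=y(c)$ branch yielding the convexity-violating quadruple. Your explicit handling of the uniform lower bound on $\eps$ across recursion levels and of the additive slack is somewhat more careful bookkeeping than the paper spells out, but it is the same argument.
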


\begin{proof}
	We prove the statement by induction on the size of $|Z|$.
	
	\textbf{Base case:} For $|Z| = 1$, there is only one label, the algorithm makes no more than one mistake and returns either node in $S$, which label does not belong to $Z$, either reveals at least $|S|$ labels in $S$ and returns $-1$. The statement is true.
	
	\textbf{Inductive step:} Assume the statement holds for $|Z| = k$. We must show that it also holds for $|Z| = k + 1$.
	
	We go through all the points where the algorithm might terminate and verify that it has accomplished what was claimed and which of the three possibilities was fulfilled. 
	
	If we terminate at $y(a)\notin Z$, line~\ref{line_node_a}, then the algorithm returned a node with a label not in $Z$ (because we just checked that in the `if` statement) and  the number of mistakes is at most $1$. 
	
	If we terminate at $b=-1$, line~\ref{line_if_b=-1}, then by the induction hypothesis, the recursive call either already has found a violated convexity quadruple (in which case we can return $-1$, as a violated convexity quadruple is found), or it has determined at least $\epsilon^{|Z|-1}|Y_a|$ labels. Using the fact that $|Y_a| \geq \epsilon|S|$ due to Observation~\ref{obs_2} and the definition of $U$-good node, we get that at least $\epsilon^{|Z|}|S|$ labels were determined, and we can return $-1$. The number of mistakes is at most $1+(3\cdot 2^{k-1}-2)=3\cdot 2^{k-1}-1$. 
	
	If we terminate at line $y(b)\neq y(a)$, then the algorithm returned a node with a label not in $Z$ and  the number of mistakes is at most 
	$1+(3\cdot 2^{k-1}-2)=3\cdot 2^{k-1}-1$. 
	
	If we terminate at $c=-1$, line~\ref{line_if_c=-1}, then by the induction hypothesis, the recursive call either already has found a violated convexity quadruple (in which case we can return $-1$, as a violated convexity quadruple is found), or it has determined at least $\epsilon^{|Z|-1}|Y_{ab}|+2$ labels. Using the fact that $|Y_{ab}| \geq \epsilon(|S|-2)$ due to Observation~\ref{obs_3}, we get that at least $\epsilon^{|Z|}|S|$ labels were determined, and we can return $-1$. The number of mistakes is at most $1+(3\cdot 2^{k-1}-2)=3\cdot 2^{k-1}-1$. 
	
	If we terminate at $y(c)\notin Z$, line~\ref{line_y(c)_notin_Z}, then the algorithm returned a node with a label not in $Z$ and  the number of mistakes is at most 
	$1+(3\cdot 2^{k-1}-2)+1=3\cdot 2^{k-1}$. 
	
	If we terminate at $d=-1$, line~\ref{line_if_d=-1}, then by the induction hypothesis, the recursive call either already has found a violated convexity quadruple (in which case we can return $-1$, as a violated convexity quadruple is found), or it has determined at least $\epsilon^{|Z|-1}|Y_{abc}|+3$ labels. Using the fact that $|Y_{abc}| \geq \epsilon(|S|-3)$ due to Observation~\ref{obs_3}, we get that at least $\epsilon^{|Z|}|S|$ labels were determined, and we can return $-1$. The number of mistakes is at most $1+(3\cdot 2^{k-1}-2)+1=3\cdot 2^{k-1}$.
	
	If we terminate at $y(d)=y(c)$, line~\ref{line_if_y(d)=y(c)}, then found a violated convexity quadruple (in which case we can return $-1$, as a violated convexity quadruple is found). The number of mistakes is at most $1+(3\cdot 2^{k-1}-2)+1+(3\cdot 2^{k-1}-2)=3\cdot 2^{k}-2$.
	
	If we return $d$, line~\ref{line_returned_d}, then the algorithm returned a node with a label not in $Z$ and  the number of mistakes is at most $1+(3\cdot 2^{k-1}-2)+1+(3\cdot 2^{k-1}-2)=3\cdot 2^{k}-2$.
	
\end{proof}

We established \Cref{obs_1}, which states that in any sufficiently large $S \subseteq V$, the number of good quadruples $\qgood(S)$ in $S$ is at least $8/{(h(G)+1)^4}$ of the total number of all quadruples $q(S)$ in $S$. Next, for $\eps=\qgood(S)/(8q(S))$ (which is at least $1/(h(G)+1)^4$), we introduced the notion of a $U$-good node and proved \Cref{obs_2}, which confirms the existence of such nodes in any sufficiently large $S \subseteq V$. We also showed that the node $a$ participating in the most number of good quadruples is $U$-good. Hence, selecting $a$, the algorithm ensures $a$ is a $U$-good node provided that $|U|$ is large enough. The algorithm recursively calls itself no more than $|Z|$ times to find a node $b$ such that $y(b)\notin Z$. If a such node $b$ is not found, then a big fraction of labels is predicted correctly, because we checked all nodes in the current set $Y_{a}$ and $Y_a$ is big enough due to Observation~\ref{obs_2}. If a such $b$ is found, then propagated back through all levels of recursion the algorithm finds a node $a$ such as $y(a)=y(b)$. Note that such node $a$ exists since $y(b)$ does not belong to the current set $Z$, but was at initial set $Z$. The algorithm then searches for a node $c\in Y_{ab}$ with $y(c) \neq y(a)$. If such a node $c$ is found, the algorithm starts a new recursive call with depth at most $Z-1$. Since our \Cref{obs_3} guarantees that each of $Y_{a,b,c}, Y_{ab}$ and $Y_a$ include at least $\eps|S|$ nodes and we proved Proposition~\ref{FindDistinctLabel_property}, it can be concluded that for each round of \textsc{Multiclass-\GoodF}, at least $\eps^k|U|$ nodes become predicted provided $U$ is big enough.

Now note that $|U| \geq \left(\left\lceil\frac{1}{\eps}\right\rceil + 3\right)^k$ and Proposition~\ref{FindDistinctLabel_property} guarantee that if \textsc{FindDistinctLabel} returns $-1$ and the labeling is convex, then at least $\eps^{|Z|-1}|S|$ labels have been observed after the current recursion call with set $S$. So, the size $|S|$ remains larger than $\left\lceil\frac{1}{\eps}\right\rceil + 3\geq h(G) + 1$ in all recursion levels (lines~\ref{node_b},\ref{node_c},\ref{node_d}). Hence, \Cref{obs_1}, \Cref{obs_2}, and \Cref{obs_3} hold. Note that if $|U|$ becomes smaller, the number of mistakes made by the algorithm is trivially at most $|U|$.

\textsc{FindDistinctLabel} always ends evaluations due to Proposition~\ref{FindDistinctLabel_property} and that both sets $S$ and $Z$ are bounded. 
On the other hand, Algorithm \textsc{Multiclass-\GoodF} always ends because the size of $U$ is bounded and after each round it removes at least $(h(G)+1)^{-k}|U|$ nodes when $|U| \geq \left(\left\lceil\frac{1}{\eps}\right\rceil + 3\right)^k$.

\begin{theorem}\label{thm:convex_case_multi}
	Let $G = (V, E)$ be a graph with $n$ nodes and Hadwiger number $h(G)$. Suppose the $k$-labeling of nodes in $G$ is convex, where $k$ is a constant. Then, there exists a version of \GoodF, that can restore all labels in $G$ with polynomial runtime in $n$ and $\scO(2^k(h(G))^{4k}\ln n)$ mistakes.
\end{theorem}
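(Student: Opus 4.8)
The plan is to reduce the whole statement to \Cref{FindDistinctLabel_property} invoked at the top level with the full label set $Z=[k]$, and then rerun the geometric-shrinkage argument from the proof of \Cref{thm:convex_case}. First I would observe that when \textsc{Multiclass-\GoodF} calls \textsc{FindDistinctLabel}$(G,U,[k])$, the first outcome of \Cref{FindDistinctLabel_property}—returning a node whose label lies outside $Z$—is vacuous, since every label belongs to $[k]$; and because the labeling is convex, the third outcome—finding a convexity-violating quadruple—cannot occur either. Hence the only possible outcome of each top-level call is the second: it returns $-1$ after revealing at least $\eps^{|Z|-1}|U|=\eps^{k-1}|U|$ labels, while incurring at most $3\cdot 2^{k-1}-2=\scO(2^k)$ mistakes.

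Next I would track the size $\sigma$ of the unknown set across rounds exactly as in the binary analysis: each round multiplies $|U|$ by at most $(1-\eps^{k-1})$, so after $q$ rounds $|U|\leq(1-\eps^{k-1})^q n$. Using $\eps\geq(h(G)+1)^{-4}$ from \Cref{obs_1} together with $\ln\frac{1}{1-x}\geq x$, the number of rounds needed for $|U|$ to drop below the threshold $(\lceil 1/\eps\rceil+3)^k$ is $\scO(\eps^{-(k-1)}\ln n)=\scO((h(G)+1)^{4(k-1)}\ln n)$. Multiplying by the per-round mistake bound $\scO(2^k)$ yields $\scO(2^k (h(G))^{4(k-1)}\ln n)$ mistakes from the main loop; the residual small set contributes at most $(\lceil 1/\eps\rceil+3)^k=\scO((h(G))^{4k})$ further mistakes when predicting arbitrarily, which is absorbed into the claimed $\scO(2^k (h(G))^{4k}\ln n)$ bound.

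For the runtime I would note that, just as in \Cref{thm:convex_case}, all good quadruples of $V$ can be precomputed in time polynomial in $n$, and every set operation inside \textsc{FindDistinctLabel} (forming $Y_a$, $Y_{ab}$, $Y_{abc}$ and their maximizers) is polynomial in the number of good quadruples. Since $k$ is constant, the recursion depth is bounded by $k$ and the total number of recursive calls per round is bounded by a constant, so each round, and hence the whole algorithm, runs in $\poly(n)$ time.

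The main obstacle is the bookkeeping guaranteeing that the per-round revelation fraction is genuinely $\eps^{k-1}$: this requires that the nested candidate sets $Y_a$ (for $b$), $Y_{ab}$ (for $c$), and $Y_{abc}$ (for $d$) each retain at least an $\eps$-fraction of their parent set, which is precisely what \Cref{obs_2} and \Cref{obs_3} supply and what the induction in \Cref{FindDistinctLabel_property} already encapsulates. The remaining care is to check that the threshold $(\lceil 1/\eps\rceil+3)^k$ is chosen large enough that \Cref{obs_1}, \Cref{obs_2}, and \Cref{obs_3} remain valid at \emph{every} recursion level, so the factor $\eps^{k-1}$ is never degraded by some $Y_a$, $Y_{ab}$, or $Y_{abc}$ shrinking below $h(G)+1$.
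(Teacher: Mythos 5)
Your proposal is correct and follows essentially the same route as the paper's proof: invoke \Cref{FindDistinctLabel_property} at the top level (where convexity rules out the violating-quadruple outcome), conclude that each round reveals an $\eps^{\Theta(k)}$-fraction of $U$ at a cost of $\scO(2^k)$ mistakes, and close with the same geometric-shrinkage count of rounds plus the residual small-set term and the polynomial-time quadruple precomputation. Your only deviations are cosmetic improvements — you note explicitly that the ``label outside $Z$'' outcome is vacuous when $Z=[k]$, and you use the sharper per-round fraction $\eps^{k-1}$ from the proposition rather than the paper's $\eps^{k}$ — both of which still land on the claimed $\scO(2^k(h(G))^{4k}\ln n)$ bound.
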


\begin{proof}
	In our setting the number of clusters is $|Z|=k$. At the end of each round $i$ of \textsc{Multiclass-\GoodF}, the algorithm learns at least $\eps^k|U|$ labels from the unknown set $U$ or finds four nodes that violate convexity. In the first case, the algorithm makes no more than $2^k$ mistakes, for details see Section~\ref{correctness}. The second case is impossible because the labeling is convex. This process reduces the size of the unknown set $U$, leading to a sequence of sizes of the unknown set $U$:
	\[
	\sigma_0 = |V|, \sigma_1, \dots, \sigma_q,
	\]
	where $q$ is the total number of rounds after the very first one, such that $\sigma_q\geq \left(\left\lceil\frac{1}{\eps}\right\rceil + 3\right)^k$ holds. We also have that $\sigma_{i} \leq \sigma_{i-1}(1-\eps^k)$ holds for each $i\in[q]$. Hence, 
	\[\sigma_q\leq (1-\eps^k)\sigma_{q-1}\leq\ldots\leq (1-\eps^k)^q \sigma_0=(1-\eps^k)^q n.\]
	Consequently, the number of rounds needed is $\scO(\log_{1/(1-\eps^k)}n)=\scO\left(\frac{\ln n}{\eps^k}\right)$. The total number of mistakes made is thus $\scO\left(2^k\left(\frac{\ln n}{\eps^k}\right)\right)+\scO\left(\left(\left\lceil\frac{1}{\eps}\right\rceil + 3\right)^k\right)=\mathcal{O}(2^k(h(G))^{4k}\ln n).$
\end{proof}

\section{More missing proofs}

\convexHalving*
\begin{proof}
	The claim immediately follows by combing Proposition~\ref{prop:mistake_vs_vc} and the fact that $\vc(\scH)=\scO(h(G))$ for halfspaces $\scH$ \citep{duchet1983ensemble,thiessen2021active}.
\end{proof}

\subsection{Lower bounds}
\simpleLower*
\begin{proof} Take the clique on $h$ nodes and a path with $n-h$ nodes attached to one of the nodes of the clique. If the path belongs to one cluster, we can force $h$ mistakes on the clique.
\end{proof}

\sFourLower*
\begin{proof}
	Let $\scH$ be the set of halfspaces of $G$. It is well known that any algorithm will make $\Omega\left(\frac{\vc(\scH)}{\ln n}\right)$ in the worst-case \citep{ben1997online}. Additionally, it is easy to see that in $S_4$ graphs any clique is shatterable Hence, $\vc(\scH)\geq\omega(G)$. 
\end{proof}

\subsection{Homophilic labelings}
\homophilicUpper*
\begin{proof}
	\textsc{Traverse} follows the following strategy. We run a basic graph traversing algorithm, such as, BFS or DFS. We predict an arbitrary label for the first node $v_1\in V$. Then, for all $t\geq 2$, every time a new node is visited it is a neighbour $v_t\in V\setminus\{v_1,\dots,v_{t-1}\}$ of a previously explored neighbour $v\in\{v_1,\dots,v_{t-1}\}$ of $v_t$. This is the case as $G$ is connected. For $v_t$ we predict $\haty_t = y(v)$ (we already know $y(v)$). We will only make a mistake if $v_t$ is a cut-node, resulting in the claimed bound. As \textsc{Traverse} runs just one graph traversal overall to learn $y$ the runtime follows.
\end{proof}

We now show an adaptation of a lower bound presented by ~\citet{cesa2009learning,cesa2011predicting} which holds for a different variant of the online learning setting, when $k=2$. The mistake bound is expressed as a function of a complexity measure called {\em merging degree}, which intuitively is related to the cut-border $\mathcal{C}_y$. We now recall the definition of this measure provided by ~\citet{cesa2009learning, cesa2011predicting}. For ease of explanation, in this lower bound we use a  definition of cluster that is different from the one provided for the elements of $\mathcal{C}_y$: Let a cluster be any 
{\em maximal} connected subgraph of $G$ that is {\em uniformly} labeled. Note that with this definition we can have up to $n$ clusters even when $k=2$. Given any cluster $C$, we denote by $\underline{\partial C}$ the subset its vertices adjacent to nodes belonging to other clusters -- called {\em inner border} of $C$. We also denote by $\overline{\partial C}$ the set of all nodes that do not belong to $C$ and are adjacent to at least one node in $\underline{\partial C}$ -- called {\em outer border} of $C$. Finally, the 
merging degree $\delta(C)$ of $C$ is then defined as $\delta(C)=\min\{\underline{\partial C}, \overline{\partial C}\}$. The merging degree of the whole graph $G$, is defined as $\delta(G)=\sum_{C\in\mathcal{P}_y}\delta(C)$, where $\mathcal{P}_y$ is the partition into the above defined clusters induced by $y$.

\homophilicLower*
\begin{proof}
	The proof is a straightforward adaptation of the one of Theorem 2 in \citet{cesa2009learning, cesa2011predicting}. The learning setting used in these two papers is not transductive, i.e., the input graph not known beforehand, in that it is revealed in an incremental fashion. 
	More precisely, let $V_t$ be the subset of all nodes of $V$ observed by the learner until time $t$. During the very first trial $t=1$, the learner is required to predict the label of an arbitrarily chosen node $v_1\in V$, and we have $V_1:=\{v_1\}$. Then, at each trial $t = 2, 3, \ldots, n$, it selects a node $q_t\in V_{t-1}$ belonging to the node set of the connected subgraph $G_{t-1}=(V_{t-1},E_{t-1})$ of $G(V,E)$ {\em induced} by $V_{t-1}$, where $E_{t-1}$ is therefore the subset of all edges in $E$ connecting any two nodes in $V_{t-1}$ for $t>2$, while $E_1:=\emptyset$. At any time $t\ge 2$, the learner selects $q_t\in V_{t-1}$ such that there exists at least one node adjacent to it in $V\setminus V_{t-1}$, receives a new vertex $v_t$ adjacent to $q_t$ with all edges connecting it with the nodes in $V_{t-1}$, and is required to output a prediction $\haty(v_t)$ for label $y(v_t)\in\mathbb{R}$, while $V_t:=V_{t-1}\cup\{v_t\}$. Then, $y(v_t)$ is revealed and the learner incurs a (real) loss measuring the discrepancy between prediction and true label, which is defined to be equal to $|y(v_t)-\haty(v_t)|$.
	
	The proof of Theorem 2 in \citet{cesa2009learning, cesa2011predicting} can be easily adapted to our context because it does not exploit the non-transductive nature of that learning setting, and we can view each label as the integer in $[k]=\{1,2\}$ of the corresponding class,\footnote{Note that this ensures that the partition induced by $y$ is {\em regular} according to Section 3 of \cite{cesa2011predicting}, because the difference between any two labels within the same cluster is always smaller (equal to $0$) than the one between two labels belonging to two different clusters (equal to $1$).} thereby using a $0/1$ loss.
	
	More precisely, following the original proof, let $G_0=(V_0,E_0)$ be an arbitrarily selected connected %
	subgraph of $G=(V,E)$ such that $|V_0|=n-c$. Let $V'$ be the set of $c$ nodes $V\setminus V_0$. We can choose one arbitrary label in $[k]$ for all the nodes of $V_0$, and force one mistake for the prediction of the label of each node in $V'$. We now need to show that each $\delta(G)\le 2c$. Note that $G_0$ must be a subgraph of a cluster $C_0\in\mathcal{P}_y$ determined by the algorithm's predictions. Furthermore, the number of nodes of $C_0$ cannot be smaller than $|V_0|=|V|-c$, which implies $\delta(C_0)\le\overline{\partial C_0}\le |V'|=c$. For what concerns the other clusters in $\mathcal{P}_y\setminus C_0$, we have $\sum_{C\in\mathcal{P}_y\setminus C_0} \delta(C) \leq 
	\sum_{C\in\mathcal{P}_y\setminus C_0} \underline{\partial C}\leq |V'|=c$. Hence, we have $\delta(G)=\delta(C_0)+\sum_{C\in\mathcal{P}_y\setminus C_0} \delta(C)\le 2c$, thereby concluding the proof. 
\end{proof}

\subsection{Graph families}
\bipartite*
\begin{proof}
	We can find any cut-edge $\{u,v\}$ with at most two mistakes. 
	Using the cut-edge, we infer labels all other nodes $x$ by comparing distances $d(u,x)$ and $d(v,x)$, where $d(\cdot,\cdot)$ is the length of shortest path between two given nodes.
\end{proof}

\subsection{Graph families}

\chordal*
\begin{proof}
	It is well known that $\omega(G)=\tw(G)+1$ for chordal graphs, in fact, it is one of the ways to define treewidth. Moreover, $h(G)\leq\tw(G)+1$ as $\tw(H)\leq\tw(G)$ holds for all minors $H$ of $G$ and $\tw(K_c)=c-1$ for the complete graph on $c$ nodes, see, e.g., \citet{diestel2017graph}. Combining this with $\omega(G)\leq h(G)$ we get $\omega(G)=h(G)$.
\end{proof}

\subsection{Grid graphs}

Here we prove \Cref{prop:grid} and discuss the \textsc{GridWalker} algorithm in two separate theorems.

Without loss of generality, we assume that the input grid graph $G$ is represented using an $n\times 4$ matrix $M$, where, for all $i\in [n]$, the elements of the $i$-th row are the indices of the nodes adjacent to $i$-th node arranged in a clockwise order, viz., starting from the node positioned at the top in the conventional representation of a grid graph on a plane. In the special case of the nodes with degree smaller than $4$, the entries corresponding to nodes that are missing in this representation, are conventially set to be equal to $0$.

\begin{theorem}
	There exists an algorithm that, operating within the self-directed learning setting, makes not more than $3k$ mistakes for any input grid graph $G(V,E)$ and any convex labeling $y:V\to [k]$, while its time complexity is linear in $n$.
\end{theorem}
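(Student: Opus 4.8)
The plan is to first pin down the structure of convex sets in grids and then design \textsc{GridWalker} to discover the resulting rectangular tiling tile by tile, paying a constant number of mistakes per tile. Since a grid is the Cartesian product of two paths, its geodesic distance is the $\ell_1$ distance, so the interval $I(u,v)$ is exactly the set of grid nodes lying in the axis-aligned bounding box of $u$ and $v$. A short argument then shows that a node set is geodesically convex if and only if it is a combinatorial rectangle $A\times B$, i.e.\ a product of a contiguous interval of columns $A$ and a contiguous interval of rows $B$: projecting a convex set to each axis gives intervals, and closure under the box operation forces the full product to be present. As each class $C_i$ is a single convex set, a convex $k$-labeling is precisely a tiling of the grid by exactly $k$ axis-aligned rectangles, one tile per label.

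Given this, \textsc{GridWalker} discovers the tiles in a \emph{staircase} order. It maintains the region $D$ of already fully discovered tiles, which I claim is always a Young-diagram-like staircase: the discovered part of each row is a prefix of the columns, of widths $w_1\ge w_2\ge\cdots$ non-increasing from top to bottom. At each step the algorithm moves to the topmost, then leftmost, cell $(r_0,c_0)$ not in $D$. The key lemma is that this cell is the top-left corner of its undiscovered tile $T=[c_0,c_2]\times[r_0,r_2]$: by minimality of $r_0$ the whole row above is discovered, so the cell above $(r_0,c_0)$ lies in a complete tile of $D$ and hence outside $T$ (forcing $r_0$ to be the top of $T$), and the cell immediately to its left is discovered by choice of $c_0$ (forcing $c_0$ to be the left edge), with the boundary cases $r_0=1$ or $c_0=1$ being immediate. \textsc{GridWalker} then (i) predicts $(r_0,c_0)$ to learn its label $\ell$; (ii) walks right along the top row predicting $\ell$ until a wrong prediction (or the border), which reveals the right edge $c_2$; (iii) walks down the left column predicting $\ell$ until a wrong prediction (or the border), which reveals the bottom edge $r_2$; and (iv) fills the remaining interior of $[c_0,c_2]\times[r_0,r_2]$ with the prediction $\ell$. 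Because $T$ is a rectangle with top-left corner $(r_0,c_0)$, its top row spans exactly columns $[c_0,c_2]$ and its left column spans exactly rows $[r_0,r_2]$, so (ii)--(iv) reconstruct $T$ correctly; any node already revealed earlier as a probe is simply read rather than re-predicted.

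For the mistake bound, every fill prediction is correct since all of $T$ carries label $\ell$, so the only possible mistakes are the corner query, the single out-of-tile node ending the rightward walk, and the single out-of-tile node ending the downward walk, i.e.\ at most three per tile (and fewer when $T$ abuts a border). Summing over the $k$ tiles yields the claimed bound of at most $3k$ mistakes. For the runtime, I would argue that inserting a tile preserves the staircase invariant and can be maintained with $\scO(1)$ amortized overhead per tile using the adjacency matrix $M$, that each node is selected and predicted exactly once, and that the probing walks together with the fills touch each node a constant number of times; this gives total running time linear in $n$.

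The hard part will be the corner lemma together with the claim that $D$ remains a staircase after each insertion, since these are exactly what guarantees that every tile is entered at its top-left corner and therefore needs only the two probing directions (right and down) rather than all four, keeping the per-tile cost at three. A secondary technical obstacle is the careful bookkeeping of probe-revealed single labels, so that they neither trigger repeated predictions nor disturb the staircase structure of $D$, while still being reused to avoid paying for them a second time.
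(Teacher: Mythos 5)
Your proof is correct, but it takes a genuinely different route from the paper's. The paper never makes the rectangle structure explicit: its \textsc{GridWalker} enters each cluster at an arbitrary border node (found when a neighbouring cluster's walk crosses a cut-edge, managed with a queue), and then uses up to three probing walks to locate two \emph{opposite corner nodes} of that cluster, with a separate case analysis for clusters that are path graphs or single nodes. You instead first prove the structural fact that geodesically convex sets in a grid are exactly products of a column-interval and a row-interval (via $I(u,v)$ being the bounding box), so a convex $k$-labeling is a tiling by at most $k$ axis-aligned rectangles, and then you force every tile to be entered at its \emph{top-left} corner by always selecting the topmost-then-leftmost unresolved cell. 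What your approach buys is a uniform per-tile routine (one corner query plus only two directional probes, no degenerate-cluster cases) and a cleaner mistake count; what the paper's adjacency-driven traversal buys is that it needs no global tiling argument, only local corner/border structure of a single cluster, and its queue-based bookkeeping makes the linear-time claim immediate.

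One claim in your write-up is false, though harmlessly so: $D$ need \emph{not} remain a staircase. In a $3\times 3$ grid (rows indexed top to bottom) tiled by $A=\{(1,1),(1,2)\}$, $B=\{(1,3),(2,3)\}$, $C=\{(2,1),(2,2),(3,1),(3,2)\}$, $D'=\{(3,3)\}$, after $A$ and then $B$ are discovered the resolved part of row $2$ is $\{(2,3)\}$, which is not a prefix of the columns. Fortunately the staircase property is never needed, and your own proof of the corner lemma does not actually use it: it only uses that (i) every cell in rows above $r_0$, and every cell left of $(r_0,c_0)$ in row $r_0$, lies in $D$ (immediate from the selection rule), and (ii) $D$ is a union of \emph{complete} tiles, so if the cell above or to the left belonged to the tile $T$ of $(r_0,c_0)$, then $T\subseteq D$, contradicting $(r_0,c_0)\notin D$. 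So the fix is simply to delete the staircase invariant rather than prove it; what remains is a complete argument. The linear-time claim also survives without it: since $D$ only grows, the topmost-then-leftmost unresolved cell advances monotonically in row-major order, so a single forward-moving pointer finds all corners in $\scO(n)$ total time, and each node is touched $\scO(1)$ times by probes and fills.
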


\medskip

\begin{proof}
	For any subgraph $G'$ of $G$, we denote by $V(G')$ its node set. 
	Given any vertex subset $V'\subseteq V$, we denote by $G(V')$ the subgraph of $G$ induced by all the nodes in $V'$. 
	We call any two clusters $C$ and $C'$ {\em adjacent} iff there exists an edge $\{u,v\}\in E$ such that $u\in V(C)$ and $v\in V(C')$. We say that a cluster is {\em discovered} when one of its node labels is revealed for the first time. 
	We prove the theorem by showing a linear time procedure that ensures to discover {\em all} the clusters of $G$ and, whenever a cluster is discovered, we can infer all its labels by making not more than $3$ mistakes.
	
	Given any cluster $C$ of the input grid graph $G$, we say that a node $v$ is at the {\em corner} of $C$ (corner node) if its degree {\em in $C$} is at most $2$ if $C$ is not a path graph\footnote{Note that if $V(C)$ consists of one node, $C$ is still a path graph.}, and is equal to $1$ otherwise. Analogously, we say that a node $v$ is at the {\em border} of $C$ (border node) if its degree {\em in $C$} is at most $3$ if $C$ is not a path graph, and is equal to $2$ otherwise. Hence, each corner node for a cluster $C$ is also one of its border nodes. Finally, we call the {\em border of $G$}, denoted by $\partial G$, the subgraph of $G$ formed by all nodes with degree at most $3$. For the sake of the simplicity of this explanation, we say that a node is {\em visited} if we selected it, predicted its label and received its true label. Finally, we assume that the number of clusters $k$ is larger than $1$.
	
	We leverage the following simple property: once we have a border node of a cluster $C$, we can find two opposite corner nodes of $C$, i.e., a pair of corner nodes of $C$ such that the geodesic distance between them is maximal, by making at most $3$ mistakes. Finding two opposite corner nodes of $C$ clearly implies that we can infer all the labels assigned to its nodes. We now describe the method to find two opposite corner nodes of any given cluster $C$ making at most $3$ mistakes, by starting visiting one of its border node $v$. For now, we assume that $C$ is not a path graph (which also includes the case where $V(C)$ consists of only one node), and we treat the path clusters later separately in the last part of the proof.

	If $v$ is a corner node, we choose a node $v'$ adjacent to $v$ in $C$, and we visit one after the other the nodes on the path starting from $v$ and including $v'$, until we visit the first node $u$ that does not belong to $V(C)$, i.e., having a different label\footnote{We can also have that $v'$ and $u$ are the same node.}, or we visit one of the nodes of $\partial G$. Let $u'$ be the node of $V(C)$ adjacent to $u$ in the former case, and the last node visited so far of $C$ in the latter case. $u'$ must be in both cases a corner node of $C$.%
	
	Then we proceed by visiting one by one all nodes in the path all contained in $C$ that starts from $u'$ and do not include any other node of the path connecting $v$ with $u'$, until, again, we visit the first node $w$ that does not belong to $C$, or we visit one of the nodes of $\partial G$. Let now $w'$ be the node of $V(C)$ adjacent to $u'$ in the former case, and the last node visited so far of $C$ in the latter case. $w'$ must be in both cases a corner node of $C$. We finally conclude by visiting the path all contained in $C$ starting from $w'$ until we visit the first node $z$ that does not belong to $C$, or we find one of the nodes of $\partial G$. It is immediate to verify that, by the construction of this procedure, the last node visited belonging to $C$ is the corner node opposite to $u'$, so that we can infer all labels of $C$. Furthermore, since the number of cut-edge traversed is at most $3$, one for each of the paths described above, the maximum number of mistakes made by predicting each label as equal to the one of the last node visited is $3$. 
	
	We consider now the case where $C$ is a path graph. If $v\in V(C)$ is the first node visited of $V$, using the above described method we can always predict all labels of $C$ making at most $3$ mistakes
	as follows. We can infer the labels of all nodes of a sub-path of $C$, find one of the two terminal nodes of $C$, make at most two mistakes, and return back to $v$ to visit the remaining part of $C$ until we either visit a node that does not belong to $C$ making one additional mistake, or we visit one of the nodes of $\partial G$. Note that this holds even in the degenerate case where $C$ consists of only one node.  If, instead, $v$ is not the first node visited of $V$, we always know that there is a cut-edge between $v$ and some node $u\in V$ which does not belong to $V(C)$ and we can apply the above strategy by visiting a node $v'$ adjacent to $v$ in $C$, i.e, such that the geodesic distance between $z$ and $v$ is exactly equal to $2$, and proceeding as in the the case where $C$ is path graph and $v$ is the first node visited of $V$.

	It is essential to note that each of the node visited of any cluster $C'\not\equiv C$ must be, by definition, a border node of an adjacent cluster, which allows us to use this procedure to infer the labels of other clusters. Finally, it is also immediate to verify that if we start visiting a node that must be at the border of some clusters, i.e., a node with degree $3$ in $G$, at any trial there must be a cluster adjacent to the ones visited so far such that we already visited one of its border nodes, which ensures that keeping applying this procedure we visit all the clusters of $G$, proving the claimed mistake bound.
	
	For what concerns the time complexity, if we use (1) a {\em queue} data structure to enqueue each (border) node visited whenever we discover its cluster, (2) we mark all visited nodes as processed, and (3) we dequeue the new (border) node and start find the labeling of its cluster as described above iff it is not marked as processed, then the worst-case running time of this strategy must be clearly linear in $n$, thereby concluding the proof. 
\end{proof}

\begin{theorem}\label{grid_lower_bound}
	For any input grid graph $G=(V,E)$, there exists a random convex node labeling $y: V\to [k]$ such that any algorithm $A$ (randomized and deterministic) operating within the self-directed learning setting is forced to make in expectation over the randomization of $y$ at least $k - H_k \ge \frac{k}{4}$ mistakes, where $H_k$ is the $k$-th harmonic number $\sum_{j=1}^{k}\tfrac1j$.
\end{theorem}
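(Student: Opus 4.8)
The plan is to exhibit a single random convex $k$-partition on which any learner, in expectation, correctly guesses the label of a freshly encountered cluster at most $H_k$ times, so that among the $k$ distinct clusters at least $k-H_k$ are first seen with a wrong prediction. The whole estimate then reduces to a coupon-collector calculation over a uniformly random permutation of the labels. For the construction I fix, independently of all randomness, a partition of $V$ into $k$ nonempty convex pieces $R_1,\dots,R_k$, draw $\pi\in S_k$ uniformly at random, and set $y(v)=\pi(i)$ for $v\in R_i$. Since $\pi$ is a bijection, the clusters of $y$ are exactly the pieces $R_i$, so $y$ is a convex $k$-partition (i.e.\ $y\in\scH$) for every $\pi$. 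To build the pieces on an arbitrary $m\times\ell$ grid with $n=m\ell\ge k$, I use two facts about grid convexity: every axis-aligned sub-rectangle is convex (all shortest paths are monotone staircases confined to the bounding box), and every horizontal interval inside a single row is convex (leaving the row strictly increases the distance). If $k\le m$, I cut the grid into $k$ horizontal bands of consecutive rows; if $k>m$, I keep the $m$ full rows and split rows into horizontal intervals until there are exactly $k$ pieces, which is possible precisely because $m\le k\le n$. In every case the resulting $k$ pieces are convex and partition $V$.

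Now fix any (possibly randomized) algorithm and reveal the fixed partition $\{R_i\}$ to it; this only helps the learner, so a lower bound in this more informed setting implies the same bound in the actual setting. The only unknown left is $\pi$. Whenever the learner selects a node in a region it has already touched, it can predict the correct label for free, so the only mistakes that matter occur at the $k$ \emph{first-touch} events $\tau_1<\cdots<\tau_k$, where $\tau_j$ is the trial at which the $j$-th distinct region is visited for the first time. Just before $\tau_j$ exactly $j-1$ regions have been touched and their labels revealed. The key observation is that, conditioned on the entire history up to $\tau_j$ (the $j-1$ already revealed labels together with all internal coins of the algorithm), the label $\pi(i_{\tau_j})$ of the newly touched region is uniform over the $k-(j-1)$ labels not yet revealed: this is the standard fact that conditioning a uniform permutation on $j-1$ of its values leaves the remaining values uniform, and it is independent of the learner's coins because $\pi$ is drawn independently. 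Hence, whatever the learner predicts, $\Pr[\text{correct at }\tau_j]\le \tfrac{1}{k-j+1}$.

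Summing over the $k$ first-touch events, the expected number of correct first-touch predictions is at most $\sum_{j=1}^{k}\tfrac{1}{k-j+1}=H_k$, so the expected number of \emph{incorrect} first-touch predictions, which lower-bounds the total expected number of mistakes, is at least $k-H_k$. It remains to check the elementary bound $k-H_k\ge k/4$ for $k\ge 2$: writing $f(k)=\tfrac34 k-H_k$, one has $f(2)=0$ and $f(k+1)-f(k)=\tfrac34-\tfrac{1}{k+1}>0$, so $f(k)\ge 0$ for all $k\ge 2$. I expect the main obstacle to be the partition construction: making it genuinely valid for every grid aspect ratio (including thin paths and square-ish grids where $k$ may exceed both side lengths) while keeping each piece convex requires the case split above; once the convex $k$-partition is in place, the permutation/coupon-collector step and the harmonic-number inequality are routine.
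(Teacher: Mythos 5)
Your proof is correct and follows essentially the same route as the paper: fix a convex $k$-partition known to the learner, assign labels by a uniformly random permutation, and observe that the expected number of correct first-touch predictions is $\sum_{j=1}^{k}\tfrac{1}{k-j+1}=H_k$, so at least $k-H_k$ mistakes are forced in expectation. You additionally make explicit two things the paper glosses over---the concrete convex partition of the grid (horizontal bands / row intervals) and the verification of $k-H_k\ge k/4$ for $k\ge 2$---and your per-discovery expectation $1-\tfrac{1}{k-j+1}$ is in fact the correct form of the quantity the paper misprints as $1-\tfrac{i}{k}$, though both sum to the same bound $k-H_k$.
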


\begin{proof}
	Without loss of generality, we assume that $A$ knows the whole partitioning of $G$ into clusters. For any partition of $G$ into $k$ clusters, there are $k!$ possible label assignments. If $y$ is selected uniformly at random from such set of all possible labelings, then $A$ makes in expectation $1-\frac{i}{k}$ mistakes on the $i$-th cluster discovered for all $i\in [k]$. Summing this quantity over all clusters we obtain $\sum_{i=1}^{k}\left(1-\frac{i}{k}\right)=k - H_k$, as claimed. 
\end{proof}

\end{document}